\pgfplotsset{compat=1.16}
\DeclareMathOperator\supp{supp}                            
\DeclareMathOperator{\EE}        {\mathds{E}}              
\DeclareMathOperator{\diagram}   {\mathcal{D}}             
\DeclareMathOperator{\diameter}  {diam}                    
\DeclareMathOperator{\prob}      {\mathds{P}}              
\DeclareMathOperator{\dist}      {dist}                    
\DeclareMathOperator{\db}        {d_b}                     
\DeclareMathOperator{\dhd}       {d_H}                     
\DeclareMathOperator{\dgh}       {d_{GH}}                  
\DeclareMathOperator{\dkl}       {KL}                      
\DeclareMathOperator{\im}        {im}                      
\DeclareMathOperator{\vr}        {\mathfrak{R}}            
\DeclareMathOperator{\PH}        {PH}                      
\DeclareMathOperator{\loss}      {\mathcal{L}}             
\DeclareMathOperator{\density}   {f_{\sigma}}              
\DeclareMathOperator{\rank}      {rank}                    
\DeclareMathOperator{\persistence}{pers}                   
\newcommand{\betti}[1]           {\ensuremath{\beta_{#1}}} 
\newcommand{\manifold}           {\mathds{M}}              
\newcommand{\pointcloud}[1][X]   {#1}                      
\newcommand{\real}               {\mathds{R}}              
\renewcommand{\natural}          {\mathds{N}}              
\newcommand{\simplicialcomplex}  {\mathfrak{K}}            
\newcommand{\homologygroup}[1]   {\mathrm{H}_{#1}}         
\newcommand{\dspace}             {\mathcal{X}}             
\newcommand{\lspace}             {\mathcal{Z}}             
\newcommand{\mat}[1]             {\mathbf{#1}\!}           
\newcommand{\Index}              {\pi}                     
\newcommand{\boundary}[1]        {\ensuremath{\partial_{#1}}}
\newcommand{\chaingroup}[1]      {\ensuremath{C_{#1}}}
\newcommand{\persistentbetti}        [2]{\ensuremath{\betti{#1}^{#2}}}
\newcommand{\persistenthomologygroup}[2]{\ensuremath{\homologygroup{#1}^{#2}}}
\newtheorem{theorem}    {Theorem}
\newtheorem{observation}{Observation}
\newcommand{\bth}{\bm{\theta} }
\newcommand{\brho}{\bm{\rho}}
\newcommand\second[1]{\textbf{#1}} 				
\newcommand\first[1]{\textbf{\underline{#1}}}
\newcommand{\data}[1]{\mbox{\textsc{#1}}}   
\newcommand  {\st}{\textsuperscript{\textup{st}}\xspace}
\newcommand  {\nd}{\textsuperscript{\textup{nd}}\xspace}
\renewcommand{\th}{\textsuperscript{\textup{th}}\xspace}
\title{Topological Autoencoders}
\begin{document}

\twocolumn[
\icmltitle{Topological Autoencoders}



\icmlsetsymbol{equal}{$\dagger$}
\icmlsetsymbol{equal_last}{$\ddagger$}

\begin{icmlauthorlist}
\icmlauthor{Michael Moor}{equal,ethz,sib}
\icmlauthor{Max Horn}{equal,ethz,sib}
\icmlauthor{Bastian Rieck}{equal_last,ethz,sib}
\icmlauthor{Karsten Borgwardt}{equal_last,ethz,sib}
\end{icmlauthorlist}
\icmlaffiliation{ethz}{Department of Biosystems Science and Engineering, ETH Zurich, 4058 Basel, Switzerland}
\icmlaffiliation{sib}{SIB Swiss Institute of Bioinformatics, Switzerland}
\icmlcorrespondingauthor{Karsten Borgwardt}{karsten.borgwardt@bsse.ethz.ch}

\icmlkeywords{Machine Learning, ICML}

\vskip 0.3in
]



\newcommand{\EqualContribution}{\textsuperscript{$\dagger$}Equal contribution. \textsuperscript{$\ddagger$}These authors jointly directed this work.}
\printAffiliationsAndNotice{\EqualContribution} 


\begin{abstract}
    We propose a novel approach for preserving topological structures of the input space in latent representations of autoencoders.
    Using \emph{persistent homology}, a technique from topological data analysis, we calculate topological signatures of both the input
    and latent space to derive a topological loss term.
    Under weak theoretical assumptions, we construct this loss
    in a differentiable manner, such that the encoding learns to
    retain multi-scale connectivity information.
     We show that our approach is theoretically well-founded and
    that it exhibits favourable latent representations on a synthetic manifold
    as well as on real-world image data sets, while preserving low reconstruction errors.
\end{abstract}

\section{Introduction}

While topological features, in particular multi-scale features derived from persistent
homology, have seen increasing use in the machine learning
community~\citep{Reininghaus15, Hofer17,  Guss18, Carriere19, Hofer19a, Hofer19b,
Ramamurthy19, Rieck19a, Rieck19b}, employing topology \emph{directly} as a constraint
for modern deep learning methods remains a challenge.
This is due to the inherently discrete nature of these computations, making backpropagation through
the computation of topological signatures immensely difficult or only possible in certain special
circumstances~\citep{Poulenard18, Chen19, Hofer19a}.

This work presents a novel approach that permits obtaining
gradients during the computation of topological signatures. This makes
it possible to employ topological constraints while training deep neural
networks, as well as building topology-preserving autoencoders.
Specifically, we make the following contributions:
\begin{compactenum}
  \item We develop a new topological loss term for autoencoders that helps harmonise the topology
  of the data space with the topology of the latent space.
  \item We prove that our approach is stable on the level of mini-batches, resulting in suitable
  approximations of the persistent homology of a data set.
  \item We empirically demonstrate that our loss term aids in dimensionality reduction by
  preserving topological structures in data sets; in particular, the learned latent representations
  are useful in that the preservation of topological structures can improve interpretability.
\end{compactenum}

\begin{figure*}[tbp]
  \centering
  \subcaptionbox{$\epsilon_0$}{%
    \includegraphics[height=2.8cm]{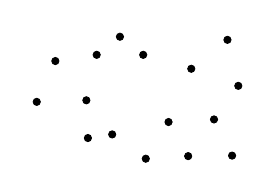}%
  }%
  \quad
  \subcaptionbox{$\epsilon_1$}{%
    \includegraphics[height=2.8cm]{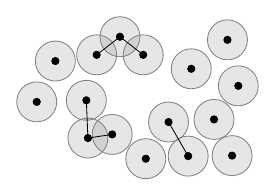}%
  }%
  \quad
  \subcaptionbox{$\epsilon_2$}{%
    \includegraphics[height=2.8cm]{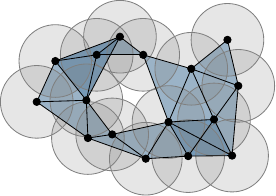}%
  }%
  \quad
  \subcaptionbox{$\diagram_{d}$\label{sfig:Persistence diagram example}}{%
    \includegraphics[height=2.8cm]{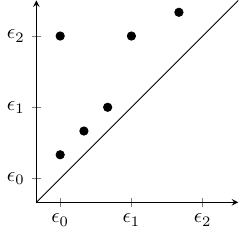}%
  }
  \caption{%
    The Vietoris--Rips complex $\vr_{\epsilon}\mleft(\pointcloud\mright)$ of a point cloud
    $\pointcloud$ at different scales $\epsilon_0$, $\epsilon_1$, and
    $\epsilon_2$.
    As the distance threshold $\epsilon$ increases, the connectivity
    changes. The creation and destruction of $d$-dimensional topological
    features is recorded in the $d$\th persistence diagram~$\diagram_d$.
  }
  \label{fig:Vietoris--Rips complex}
\end{figure*}

\section{Background: Persistent Homology}\label{sec:Background}

Persistent homology~\citep{Edelsbrunner08a, Barannikov94} is a method
from the field of computational topology, which develops tools for
analysing topological features~(connectivity-based features such as
connected components) of data sets.
We first introduce the underlying concept of simplicial homology.
For a simplicial complex $\simplicialcomplex$, i.e.\
a generalised graph with higher-order connectivity information such as
cliques, simplicial homology employs matrix reduction algorithms to
assign $\simplicialcomplex$ a family of groups, the
\emph{homology groups}.
The $d$\th homology group
$\homologygroup{d}\mleft(\simplicialcomplex\mright)$ of
$\simplicialcomplex$ contains \mbox{$d$-dimensional} topological features, such
as connected components~($d = 0$), cycles/tunnels~($d = 1$), and
voids~($d = 2$). Homology groups are typically summarised by their
ranks, thereby obtaining a simple invariant ``signature'' of a manifold.
For example, a circle in $\real^2$ has one feature with $d = 1$~(a cycle), and one feature with $d
= 0$~(a connected component).

In practice, the underlying manifold $\manifold$ is unknown and we are
working with a point cloud $\pointcloud := \mleft\{x_1, \dots,
x_n\mright\} \subseteq \real^d$ and a metric
$\dist\colon\pointcloud \times \pointcloud \to \real$ such as the
Euclidean distance.
Persistent homology extends simplicial homology to this setting: instead of
approximating $\manifold$ by means of a \emph{single} simplicial complex, which
would be an unstable procedure due to the discrete nature of $\pointcloud$, persistent
homology tracks changes in the homology groups over \emph{multiple} scales of the metric.
This is achieved by constructing a special simplicial complex, the Vietoris--Rips
complex~\citep{Vietoris27}.
For $0 \leq \epsilon < \infty$, the Vietoris--Rips complex of
$\pointcloud$ at scale $\epsilon$, denoted by
$\vr_{\epsilon}\mleft(\pointcloud\mright)$, contains all
simplices~(i.e.\ subsets) of $\pointcloud$ whose elements $\{x_0, x_1,
\dots\}$ satisfy $\dist\mleft(x_i,
x_j\mright) \leq \epsilon$ for all $i$, $j$.
Given a matrix $\mat{A}$ of pairwise distances of a point cloud
$\pointcloud$, we will use $\vr_{\epsilon}\mleft(\mat{A}\mright)$ and
$\vr_{\epsilon}\mleft(\pointcloud\mright)$ interchangeably because
constructing $\vr_{\epsilon}$ only requires distances.
Vietoris--Rips complexes satisfy a nesting relation, i.e.\
$\vr_{\epsilon_i}\mleft(\pointcloud\mright) \subseteq
\vr_{\epsilon_j}\mleft(\pointcloud\mright)$ for $\epsilon_i \leq
\epsilon_j$, making it possible to track changes in the homology groups
as $\epsilon$ increases~\citep{Edelsbrunner02}.
Figure~\ref{fig:Vietoris--Rips complex} illustrates this process.
Since $\pointcloud$ contains a finite number of points,
a maximum $\tilde{\epsilon}$ value exists for which the connectivity
stabilises; therefore, calculating $\vr_{\tilde{\epsilon}}$ is
sufficient to obtain topological features at all scales.

We write $\PH\mleft(\vr_{\epsilon}\mleft(\pointcloud\mright)\mright)$ for the persistent homology
calculation of the Vietoris--Rips complex. It results in a tuple $\mleft(\mleft\{\diagram_1,
\diagram_2, \dots\mright\}, \mleft\{\Index_1, \Index_2, \dots\mright\}\mright)$ of
\emph{persistence diagrams}~($1$\st component) and \emph{persistence
pairings}~($2$\nd component).
The $d$-dimensional persistence diagram
$\diagram_d$~(Figure~\ref{sfig:Persistence diagram example}) of
$\vr_{\epsilon}\mleft(\pointcloud\mright)$ contains coordinates of the
form $(a, b)$, where $a$ refers to a threshold $\epsilon$ at which
a $d$-dimensional topological feature is created in the Vietoris--Rips
complex, and $b$ refers to a threshold $\epsilon'$ at which it is
destroyed~(please refer to Supplementary Section~\ref{sec:Persistent
homology details} for a detailed explanation).
When $d = 0$, for example, the threshold $\epsilon'$ indicates at which
distance two connected components in $\pointcloud$ are merged into
one. This calculation is known to be related to spanning
trees~\citep{Kurlin15} and single-linkage clustering, but the
persistence diagrams and the persistence pairings carry more
information than either one of these concepts.

The $d$-dimensional persistence pairing contains indices $(i, j)$ corresponding to
simplices $s_i, s_j \in \vr_{\epsilon}\mleft(\pointcloud\mright)$ that create and destroy
the topological feature identified by $(a, b) \in \diagram_d$, respectively.
Persistence diagrams are known to be stable with respect to small
perturbations in the data set~\citep{Cohen-Steiner07}. Two diagrams
$\diagram$ and $\diagram'$ can be compared using the \emph{bottleneck
distance}
$
  \db(\diagram, \diagram') := \inf_{\eta\colon \diagram \to \diagram'}\sup_{x\in{}\diagram}\|x-\eta(x)\|_\infty
$,
where $\eta\colon\diagram \to \diagram'$ denotes a bijection between the points of the
two diagrams, and $\|\cdot\|_\infty$ refers to the $\mathrm{L}_\infty$ norm.
We use $\diagram^{\pointcloud}$ to refer to the \emph{set} of
persistence diagrams of a point cloud $\pointcloud$ arising from
$\PH\mleft(\vr_{\epsilon}\mleft(\pointcloud\mright)\mright)$.

\begin{figure}[tbp]
  \centering
  \definecolor{cardinal}{RGB}{196, 30, 58}
  \definecolor{yale}    {RGB}{ 70,130,180}

  \def\layersep{0.6cm}
  \tikzstyle{na} = [baseline=-.5ex]
  \colorlet{reconst}{cardinal}
  \colorlet{topo}{yale}
  \scalebox{0.80}{%
  \begin{tikzpicture}[shorten >=1pt, node distance=\layersep]
  \tikzstyle{neuron}=[circle,minimum size=3pt,inner sep=0pt, fill=black]
  \tikzstyle{con}=[gray, very thin]
  \tikzstyle{input neuron}=[neuron];
  \tikzstyle{output neuron}=[neuron, fill=black];
  \tikzstyle{hidden neuron}=[neuron, fill=black];
  \tikzstyle{annot} = [text width=5em, text centered]
  \tikzstyle{loss} = [text width=3.5cm, minimum height=0.75cm, align=center]

  \foreach \name / \y in {0,...,7}
      \path[yshift=-0.75cm]
          node[input neuron] (I-\name) at ($ (\layersep,-0.2*\y cm) $) {};
  \foreach \name / \y in {0,...,3}
      \path[yshift=-1.125cm]
          node[hidden neuron] (E-\name) at ($ (2*\layersep,-0.2*\y cm) $) {};
  \foreach \name / \y in {0,...,1}
      \path[yshift=-1.325cm]
          node[output neuron] (Z-\name) at ($ (3*\layersep,-0.2*\y cm) $) {};

  \foreach \name / \y in {0,...,3}
      \path[yshift=-1.125cm]
          node[hidden neuron] (D-\name) at ($ (4*\layersep,-0.2*\y cm) $) {};
  \foreach \name / \y in {0,...,7}
      \path[yshift=-0.75cm]
          node[input neuron] (O-\name) at ($ (5*\layersep,-0.2*\y cm) $) {};

  \foreach \source in {0,...,7}
     \foreach \dest in {0,...,3}
         \path[con] (I-\source) edge (E-\dest);


  \foreach \source in {0,...,3}
      \foreach \dest in {0,...,1}
          \path[con] (E-\source) edge (Z-\dest);

  \foreach \source in {0,...,1}
      \foreach \dest in {0,...,3}
          \path[con] (Z-\source) edge (D-\dest);

  \foreach \source in {0,...,3}
      \foreach \dest in {0,...,7}
          \path[con] (D-\source) edge (O-\dest);


  \node[annot,below of=Z-1, node distance=1cm] (latent) {\large $\pointcloud[Z]$ \\ \small Latent code};
  \node[annot,left of=I-3, node distance=0.25cm, anchor=east] (input)
  {\large$\pointcloud$ \\ \small Input data};
  \node[annot,right of=O-3, node distance=0.25cm, anchor=west] (reconst)
  {\large $\tilde{\pointcloud}$ \\ \small Reconstruction};

  \node[loss, right of=reconst, node distance=0.8cm, anchor=west, yshift=-1.20cm] (loss-reconst) {
      Reconstruction loss
  };
  \coordinate (ref) at ($ (I-0.north) + (0,0.5) $);
  \draw[thick, reconst, -latex] (input.north) -- (input.north |- ref) -| (loss-reconst.north);
  \draw[thick, reconst] (reconst.north)--($ (reconst.north |- ref) + (0, 0.03cm) $);

  \coordinate (pers-diag-y) at ($ (I-7.south) - (0,0.5) $);

      \begin{axis}[%
          width         = 3cm,
          height        = 3cm,
          axis x line   = bottom,
          axis y line   = left,
          mark size     = 1pt,
          xmin          = -1.0,
          ymin          = -1.0,
          xmax          =  9.0,
          ymax          =  9.0,
          xlabel        = $\epsilon$,
          ylabel        = $\epsilon'$,
          ticks         = none,
          ylabel style  = {rotate=-90},
          at            = {($ (input.south |- pers-diag-y) + (0, -0.75cm)$)},
          anchor        = north,
          name          = {input-pers}
        ]
        \addplot[black,only marks] file {data/Persistence_diagram_example.txt};
        \addplot[domain=-1:9, inner sep=0pt] {x};
      \end{axis}

      \begin{axis}[%
          width         = 3cm,
          height        = 3cm,
          axis x line   = bottom,
          axis y line   = left,
          mark size     = 1pt,
          xmin          = -1.0,
          ymin          = -1.0,
          xmax          =  9.0,
          ymax          =  9.0,
          xlabel        = $\epsilon$,
          ylabel        = $\epsilon'$,
          ticks         = none,
          ylabel style  = {rotate=-90},
          at            = {($ (latent.south |- pers-diag-y) + (0, -0.75cm)$)},
          anchor        = north,
          name          = {latent-pers}
        ]
        \addplot[black,only marks] file {data/Persistence_diagram_example.txt};
        \addplot[domain=-1:9, inner sep=0pt] {x};
      \end{axis}

      \draw[thick, topo, -latex] (input.south) -- (input-pers.north);
      \draw[thick, topo, -latex] (latent.south) -- (latent-pers.north);

      \node[loss, below=0.50cm of loss-reconst, anchor=north] (loss-topo) {
          Topological loss
      };

      \coordinate (arrow-topo-y) at ($ (input-pers.south) + (0, -0.75cm) $);

      \draw[thick, topo, -latex]
         ($ (input-pers.south) + (0, -0.4cm) $) --
         (input-pers.south |- arrow-topo-y) -|
         (loss-topo.south);
      \draw[thick, topo]
          ($ (latent-pers.south) + (0, -0.4cm) $) --
          ($ (latent-pers.south |- arrow-topo-y) - (0, 0.03cm) $);
  \end{tikzpicture}
  }%
  \caption{%
    An overview of our method. Given a mini-batch $\pointcloud$ of data
    space $\dspace$, we train an autoencoder to reconstruct
    $\pointcloud$, leading to a reconstruction $\tilde{\pointcloud}$.
    In addition to the usual reconstruction loss, we calculate
    our \emph{topological loss} based on the topological differences
    between persistence diagrams, i.e.\ topological feature
    descriptors, calculated on the mini-batch~$\pointcloud$ and its
    corresponding latent code~$\pointcloud[Z]$. The objective of our
    topological loss term is to constrain the autoencoder such that
    topological features in the data space are preserved in latent
    representations.
  }
  \label{fig:Overview}
\end{figure}
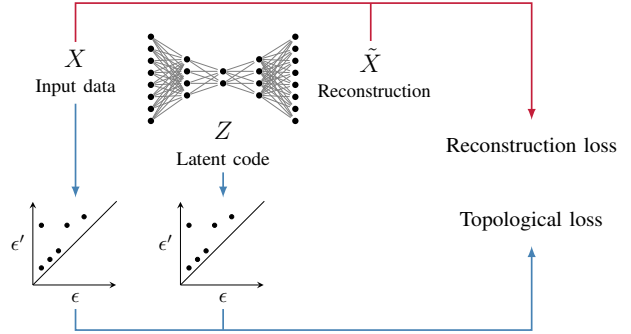

\section{A Topology-Preserving Autoencoder}

We propose a generic framework for constraining autoencoders to preserve topological
structures~(measured via persistent homology) of the data space in their latent encodings.
Figure~\ref{fig:Overview} depicts an overview of our method; the
subsequent sections will provide more details about the individual
steps. 

\subsection{Vietoris--Rips Complex Calculation}\label{sec:Vietoris--Rips complex}

Given a finite metric space $\mathcal{S}$, such as a point cloud, we first calculate the
persistent homology of the Vietoris--Rips complex of its distance matrix
$\mat{A}^{\mathcal{S}}$. It is common practice to use the Euclidean
distance for the calculation of $\mat{A}^{\mathcal{S}}$, but both the persistent homology
calculation and our method are \emph{not} restricted to any particular
distance; previous research~\citep{Wagner14} shows that even similarity
measures that do not satisfy the properties of a metric can be used
successfully with $\PH(\cdot)$.
Subsequently, let $\epsilon := \max\mat{A}^{\mathcal{S}}$ so
that $\vr_{\epsilon}\mleft(\mathcal{\mat{A}^{\mathcal{S}}}\mright)$ is
the corresponding Vietoris--Rips complex as described in
Section~\ref{sec:Background}.
Given a maximum dimension\footnote{%
  This means that we do not have to consider higher-dimensional
topological features, making the calculation more efficient.
}
of $d \in \natural_{> 0}$, we obtain a set of persistence
diagrams $\diagram^{\mathcal{S}}$, and a set of persistence
pairings $\Index^\mathcal{S}$.
The $d$\th persistence pairing $\Index_d^{\mathcal{S}}$
contains indices of simplices that are pertinent to the creation and
destruction of $d$-dimensional topological features.
We can consider each pairing to represent \emph{edge indices}, namely
the edges that are deemed to be ``topologically relevant'' by the
computation of persistent homology~(see below for more details).
This works because the Vietoris--Rips complex is a \emph{clique
complex}, i.e.\ a simplicial complex that is fully determined by
its edges~\citep{Zomorodian10a}.

\paragraph{Selecting indices from pairings}
The basic idea of our method involves selecting indices in the
persistence pairing and mapping them back to a distance between two
vertices. We then adjust this distance to harmonise topological features
of the input space and the latent space.
For \mbox{$0$-dimensional} topological features, it is sufficient to consider
the indices of \emph{edges}, which are the ``destroyer'' simplices,
in the pairing $\Index^\mathcal{S}_0$.
Each index corresponds to an edge in the minimum spanning tree of the data set. This
calculation is computationally efficient, having a worst-case complexity of $\mathcal{O}\mleft(m^2
\cdot \alpha\mleft(m^2\mright)\mright)$, where $m$ is the batch size and $\alpha(\cdot)$ denotes the
extremely slow-growing inverse Ackermann function~\citep[Chapter~22]{Cormen09}.
For \mbox{$1$-dimensional} features, where edges are paired with
triangles, we obtain edge indices by selecting the edge with the maximum
weight of the triangle. While this procedure, and thus our method,
generalises to higher dimensions, our current implementation supports no
higher-dimensional features.
Since preliminary experiments showed that using $1$-dimensional
topological features merely increases runtime, the subsequent
experiments will focus only on \mbox{$0$-dimensional} persistence diagrams.
We thus use $\mleft(\diagram^\mathcal{S}, \Index^\mathcal{S}\mright)$ to
denote the \mbox{0-dimensional} persistence diagram and pairing of
$\mathcal{S}$, respectively.

\subsection{Topological Autoencoder}
\label{Sec:topoAE}

In the following, we consider a mini-batch $\pointcloud$ of size $m$
from the data space $\dspace$ as a point cloud. Furthermore, we define
an autoencoder as the composition of two functions $h \circ g$, where
$g\colon \dspace \to \lspace$ represents the \emph{encoder} and $h\colon
\lspace \to \dspace$ represents the \emph{decoder}, denoting latent
codes by $\pointcloud[Z] := g\mleft( \pointcloud \mright)$.
During a forward pass of the autoencoder, we compute the persistent homology of the mini-batch in
both the data as well as the latent space, yielding two sets of tuples,
i.e.\
$\mleft( \diagram^{\pointcloud}, \Index^{\pointcloud} \mright) := \PH\mleft(\vr_{\epsilon}\mleft(\pointcloud\mright)\mright)$
and $\mleft( \diagram^{\pointcloud[Z]}, \Index^{\pointcloud[Z]} \mright) := \PH\mleft(\vr_{\epsilon}\mleft(\pointcloud[Z]\mright)\mright)$.
The values of the persistence diagram can be retrieved by subsetting the
distance matrix with the edge indices provided by the persistence
pairings; we write
$\diagram^{\pointcloud} \simeq \mat{A}^{\pointcloud}\mleft[\Index^{\pointcloud} \mright]$
to indicate that the diagram, which is a set, contains the same
information as the distances we retrieve with the pairing.
We treat $\mat{A}^{\pointcloud}\mleft[ \Index^{\pointcloud}\mright]$ as
a vector in $\mathds{R}^{|\Index^{\pointcloud}|}$.
Informally speaking, the persistent homology calculation can thus be seen as a selection of
topologically relevant edges of the Vietoris--Rips complex, followed by the selection of
corresponding entries in the distance matrix.
By comparing both diagrams, we can construct a topological regularisation term
$\loss_t := \loss_{t}\mleft(\mat{A}^{\pointcloud}, \mat{A}^{\pointcloud[Z]}, \Index^{\pointcloud},
 \Index^{\pointcloud[Z]}\mright)$, which we add to the
reconstruction loss of an autoencoder, i.e.\
\begin{align}
    \loss :=        \loss_{r}\mleft(\pointcloud, h\mleft(g\mleft(\pointcloud\mright)\mright)\mright)
          + \lambda \loss_{t}
\end{align}
where $\lambda \in \real$ is a parameter to control the strength of the
regularisation~(see also Supplementary Section~\ref{Supp:Arch and Hyper}).

Next, we discuss how to specify $\loss_t$. Since we only select edge
indices from $\Index^{\pointcloud}$ and $\Index^{\pointcloud[Z]}$, the
$\PH$ calculation represents a selection of topologically relevant
\emph{distances} from the distance matrix.
Each persistence diagram entry corresponds to a distance between two
data points. Following standard assumptions in persistent
homology~\citep{Poulenard18, Hofer19a}, we assume that the distances are
\emph{unique} so that each entry in the diagram has an infinitesimal
neighbourhood that only contains a single point.
In practice, this can always be achieved by performing~(symbolic)
perturbations of the distances.
Given this fixed pairing and a differentiable distance function, the
persistence diagram entries are therefore \emph{also} differentiable
with respect to the encoder parameters.
Hence, the persistence pairing does not change upon a small perturbation
of the underlying distances, thereby guaranteeing the existence of the
derivative of our loss function. This, in turn, permits the calculation
of gradients for backpropagation.

A straightforward approach to impose the data space topology on the
latent space would be to directly calculate a loss based on the selected
distances in both spaces.
Such an approach will \emph{not} result in informative gradients for
the autoencoder, as it merely compares topological features without
matching\footnote{%
  We use the term ``matching'' only to build intuition. Our approach
  does not calculate a matching in the sense of a bottleneck or
  Wasserstein distance between persistence diagrams.
}
the edges between $\vr_{\epsilon}\mleft(\pointcloud \mright)$
and $\vr_{\epsilon}\mleft(\pointcloud[Z] \mright)$.
A cleaner approach would be to enforce similarity on the intersection of
the selected edges in both complexes. However, this would initially
include very few edges, preventing efficient training and leading to
highly biased estimates of the topological alignments between the
spaces\footnote{%
  When initialising a random latent space $\pointcloud[Z]$, the
  persistence pairing in the latent space will select random
  edges, resulting in only $1$ expected matched edge~(independent of
  mini-batch size) between the two pairings. Thus, only one
  edge~(referring to one pairwise distance between two latent codes)
  could be used to update the encoding of these two data points.
}.
To overcome this, we account for the \emph{union} of all selected edges in
$\pointcloud$ and $\pointcloud[Z]$.
Our topological loss term decomposes into two components, each handling
the ``directed'' loss occurring as topological features in one of the
two spaces remain fixed. Hence,
$\loss_{t} = \loss_{\dspace \to \lspace} + \loss_{\lspace \to \dspace}$,
with
\begin{align}
  \loss_{\dspace \to \lspace} := \frac{1}{2} \mleft\| \mat{A}^{\pointcloud}\mleft[ \Index^{\pointcloud} \mright] - \mat{A}^{Z}\mleft[ \Index^{\pointcloud} \mright] \mright\|^2\\
  \shortintertext{and}
  \loss_{\lspace \to \dspace} := \frac{1}{2} \mleft\| \mat{A}^{Z}\mleft[ \Index^{Z} \mright] - \mat{A}^{\pointcloud}\mleft[ \Index^{Z} \mright] \mright\|^2,
\end{align}
respectively. The key idea for both terms is to align and preserve
topologically relevant distances from both spaces. By taking the union
of all selected edges~(and the corresponding distances), we obtain an
informative loss term that is determined by at least $|\pointcloud|$
distances.
This loss can be seen as a more generic version of the loss introduced by
\citet{Hofer19a}, whose formulation does not take the two directed
components into account and optimises the destruction values of all
persistence tuples with respect to a uniform parameter~(their goal
is different from ours and does not require a loss term
that is capable of harmonising topological features across the two
spaces; please refer to Section~\ref{sec:Related
work} for a brief discussion).
By contrast, our formulation aims to
to align the distances between $\pointcloud$ and
$\pointcloud[Z]$~(which in turn will lead to an alignment of distances
between $\dspace$ and $\lspace$).
If the two spaces are aligned perfectly, $\loss_{\dspace \to \lspace}
= \loss_{\lspace \to \dspace} = 0$ because both pairings and their
corresponding distances coincide.
The converse implication is not true: if $\loss_{t} = 0$, the
persistence pairings and their corresponding persistence diagrams are
not necessarily identical. Since we did not observe such behaviour in
our experiments, however, we leave a more formal treatment of these situations
for future work.

\paragraph{
  Gradient calculation
}%
Letting $\bth$ refer to the parameters of the \emph{encoder}
and using $\brho := \mleft(\mat{A}^{\pointcloud}\mleft[ \Index^{\pointcloud}
\mright] - \mat{A}^{Z}\mleft[ \Index^{\pointcloud} \mright]\mright)$,
we have
\begin{align}
  \frac{\partial}{\partial\bth} \loss_{\dspace \to \lspace} &=
    \frac{\partial}{\partial\bth} \mleft(\frac{1}{2}\mleft\| \mat{A}^{\pointcloud}\mleft[
      \Index^{\pointcloud} \mright] - \mat{A}^{Z}\mleft[ \Index^{\pointcloud} \mright] \mright\|^2
    \mright)\\
                                                            &= - \brho^\top \mleft(\frac{\partial\mat{A}^{Z}\mleft[ \Index^{\pointcloud} \mright]}{\partial\bth} \mright)\\
      &= - \brho^\top \mleft(\sum_{i = 1}^{\mleft|\Index^{\pointcloud}\mright|}
      \frac{\partial\mat{A}^{Z}\mleft[ \Index^{\pointcloud} \mright]_{i}}{\partial\bth} \mright),
\end{align}
where $\mleft|\Index^{\pointcloud}\mright|$ denotes the cardinality of a persistence pairing and
$\mat{A}^{Z}\mleft[ \Index^{\pointcloud} \mright]_{i}$ refers to the $i$\th entry of the vector of
paired distances. This derivation works analogously for $\loss_{\lspace \to \dspace}$~(with
$\Index^{\pointcloud}$ being replaced by $\Index^{\pointcloud[Z]}$).
Furthermore, any derivative of $\mat{A}^{\pointcloud}$ with respect to
$\bth$ must vanish because the distances of the input samples do not
depend on the encoding by definition.
These equations assume infinitesimal perturbations. The persistence
diagrams change in a non-differentiable manner during the training
phase. However, for any given update step, a diagram is robust to
infinitesimal changes of its entries~\citep{Cohen-Steiner07}. As
a consequence, our topological loss is differentiable for each update
step during training.
We make our code publicly available\footnote{%
  \url{https://github.com/BorgwardtLab/topological-autoencoders}
}.

\subsection{Stability}
\label{Sec:stability}

Despite the aforementioned known stability of persistence diagrams with
respect to small perturbations of the underlying space, we still have to
analyse our topological approximation on the level of mini-batches. The
following theorem guarantees that subsampled persistence diagrams are
close to the persistence diagrams of the original point cloud.
\begin{theorem}
  Let $\pointcloud$ be a point cloud of cardinality $n$ and $\pointcloud^{(m)}$ be one subsample of
  $\pointcloud$ of cardinality $m$, i.e.\ $\pointcloud^{(m)} \subseteq \pointcloud$, sampled without
  replacement. We can bound the probability of the persistence diagrams
  of $\pointcloud^{(m)}$ exceeding a threshold in terms of
  the bottleneck distance as
  \begin{equation}
    \prob\mleft(\db\!\mleft(\diagram^{\pointcloud}\!\!, \diagram^{\pointcloud^{(m)}}\mright)\!>\!\epsilon \mright)\leq \prob\mleft( \dhd\!\mleft(\pointcloud, \pointcloud^{(m)}\mright)\!>\!2\epsilon \mright),
    \label{eq:Probability bottleneck distance}
  \end{equation}
  where $\dhd$ refers to the Hausdorff distance between the point cloud
  and its subsample.
  \label{thm:Probability bottleneck distance}
\end{theorem}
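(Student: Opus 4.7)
The argument reduces to a deterministic stability inequality for Vietoris--Rips persistence diagrams combined with monotonicity of probability; no serious probabilistic machinery is required.

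First, I would invoke the classical stability theorem for Vietoris--Rips persistent homology (Chazal--de Silva--Oudot, building on the functional stability of Cohen--Steiner et al.). For any two finite point clouds $X$ and $Y$ embedded in a common metric space, this theorem provides a deterministic bound of the form $\db(\diagram^{X}, \diagram^{Y}) \leq c \cdot \dhd(X,Y)$, where $c$ is a universal constant whose precise value depends on the normalisation of the filtration. The standard derivation chains the Gromov--Hausdorff stability bound $\db \leq 2\,\dgh$ with the observation that $\dgh(X,Y) \leq \dhd(X,Y)$ whenever $X$ and $Y$ share an ambient metric space, because the identity inclusions furnish an admissible correspondence.

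Second, I would specialise this to $(X, Y) = (\pointcloud, \pointcloud^{(m)})$, noting that by construction $\pointcloud^{(m)} \subseteq \pointcloud$ and both clouds inherit the same restricted metric. Since the stability inequality holds pointwise for \emph{every} realisation of the random subsample, its contrapositive yields the sample-wise event containment $\{\db(\diagram^{\pointcloud}, \diagram^{\pointcloud^{(m)}}) > \epsilon\} \subseteq \{\dhd(\pointcloud, \pointcloud^{(m)}) > 2\epsilon\}$, once the constant from the stability bound is matched against the factor of $2$ in the statement. Monotonicity of probability, $\prob(A)\leq \prob(B)$ whenever $A \subseteq B$, then delivers the claimed inequality in a single line.

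The main obstacle is therefore bookkeeping rather than mathematics: identifying the precise variant of Vietoris--Rips stability (and the precise conventions for the Hausdorff distance and the filtration parameter) that produces exactly the factor of $2$ appearing on the right-hand side. Beyond that, the proof has no genuinely probabilistic content --- the randomness enters only through the sample-path inclusion of events, and measurability is automatic because $\db$ and $\dhd$ are deterministic continuous functions of the finite point cloud $\pointcloud^{(m)}$.
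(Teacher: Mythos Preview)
Your approach is essentially identical to the paper's: invoke the Gromov--Hausdorff stability of Vietoris--Rips persistence ($\db \leq 2\dgh$, \citet{Chazal14}), bound $\dgh \leq \dhd$ using the shared ambient metric, and then pass to probabilities via the event inclusion $\{\db>\epsilon\}\subseteq\{\dhd>\text{const}\cdot\epsilon\}$. Your caveat about bookkeeping the constant is well-placed: from $\db \leq 2\dhd$ one actually obtains $\prob(\db>\epsilon)\leq\prob(\dhd>\epsilon/2)$ rather than $\prob(\dhd>2\epsilon)$, so the factor of~$2$ in the stated inequality appears to sit on the wrong side in the paper as well.
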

\begin{proof}
  See Section~\ref{sec:Proof probability} in the supplementary materials.
\end{proof}
For $m \to n$, each mini-batch converges to the original point cloud, so
we have $\lim_{m \to n}\dhd\mleft(\pointcloud, \pointcloud^{(m)}\mright)
= 0$.
Please refer to Section~\ref{sec:Empirical convergence rates} for an
analysis of empirical convergence rates as well as a discussion of
a worst-case bound.
Given certain independence assumptions, the next theorem approximates
the expected value of the Hausdorff distance between the point cloud and
a mini-batch. The calculation of an exact representation is beyond the
scope of this work.
\begin{theorem}
  Let $\mat{A} \in \real^{n \times m}$ be the distance matrix between samples of $\pointcloud$
  and $\pointcloud^{(m)}$, where the rows are sorted such that the first $m$ rows correspond to the
  columns of the $m$ subsampled points with diagonal elements $a_{ii} = 0$. Assume that the entries
  $a_{ij}$ with $i>m$ are random samples following a distance distribution $F_D$ with $\supp(F_D) \in
  \real_{\geq 0}$.
  The minimal distances $\delta_{i}$ for rows with $i > m$ follow a distribution $F_{\Delta}$.
  Letting $Z := \max_{1 \leq i \leq n}\delta_{i}$ with a corresponding distribution $F_Z$, the
  expected Hausdorff distance between $\pointcloud$ and $\pointcloud^{(m)}$ for $m < n$ is bounded by
  \begin{align}
    \EE \mleft[ \dhd(\pointcloud, \pointcloud^{(m)}) \mright] &= \EE_{Z \sim F_Z} \mleft[ Z \mright]\\
    &\leq\!\!\int\displaylimits_{0}^{+\infty} \mleft( 1 - F_{\Delta}(z)^{n-m} \mright)
      \operatorname{d}\!z  \label{Eq:upper bound},
     \shortintertext{where}
    F_{\Delta}(z) &= - \sum_{k=1}^m {m \choose k}\mleft( - F_D\mleft(z\mright)\mright)^{m-k}.
  \end{align}
  \label{thm:ExpectedHausdorff}
\end{theorem}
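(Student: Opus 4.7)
The first step reduces the expected Hausdorff distance to $\EE[Z]$. Since $\pointcloud^{(m)} \subseteq \pointcloud$, the side $\sup_{y \in \pointcloud^{(m)}} \inf_{x \in \pointcloud} d(x,y)$ of the Hausdorff distance vanishes, so $\dhd(\pointcloud, \pointcloud^{(m)}) = \max_{i} \delta_{i}$. For $i \leq m$ the point $x_i$ already lies in $\pointcloud^{(m)}$ and $a_{ii}=0$ forces $\delta_i = 0$, so the maximum collapses to $\max_{i>m}\delta_{i} = Z$. Taking expectations yields the first equality of the theorem.

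For the upper bounds, the plan is to invoke the standard tail-integral identity $\EE[Z] = \int_{0}^{\infty} \prob(Z > z)\, dz$, valid because $Z \geq 0$, and to lower-bound $F_Z$ in two complementary ways. For the tighter middle bound I would fix any single anchor column $j^{\ast} \in \{1,\dots,m\}$ and apply the trivial pointwise inequality $\delta_{i} \leq a_{i,j^{\ast}}$ for every $i$, so that $Z \leq \max_{i \neq j^{\ast}} a_{i,j^{\ast}}$; this is a maximum over $n-1$ distances from the anchor $x_{j^{\ast}}$, and under the independence hypothesis (extended symmetrically so that all $n-1$ such entries are iid $F_D$), the CDF of this maximum is $F_{D}(z)^{n-1}$, yielding the middle inequality after integration. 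For the outer bound I would use the cruder observation that $Z$ is a max-of-mins over the $m(n-m)$ random entries in the last $n-m$ rows and is therefore dominated by their unrestricted maximum, whose CDF under independence is $F_{D}(z)^{m(n-m)}$. The chain closes because $m(n-m) \geq n-1$ for every $1 \leq m \leq n-1$, which forces $F_{D}(z)^{m(n-m)} \leq F_{D}(z)^{n-1}$ pointwise.

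The main obstacle is the independence hypothesis itself: pairwise distances in a genuine metric space are tightly coupled through the triangle inequality, and treating the entries of $\mat{A}$ as iid produces only a heuristic rather than a pointwise deterministic inequality. A secondary subtlety is that the $n-1$ count in the middle bound tacitly requires the $m-1$ intra-subsample distances out of $x_{j^{\ast}}$ to share the distribution $F_D$ with the $n-m$ subsample-to-complement distances; this symmetric extension of the stated hypothesis is natural but should be flagged when the bound is invoked. Beyond these modelling caveats, the argument reduces to elementary extreme-value and order-statistics calculations, with no further combinatorial or topological input required.
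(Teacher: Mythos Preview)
Your reduction of $\EE[\dhd]$ to $\EE[Z]$ via the one-sided Hausdorff distance and the vanishing first $m$ row-minima matches the paper exactly. After that the two arguments diverge.

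For the outer bound with exponent $m(n-m)$, the paper computes the CDF of the max-of-mins under a full iid model (first $F_\Delta=F_D^m$ for the row minima, then $F_{Z'}=F_\Delta^{n-m}$) and argues heuristically that $\EE[Z]\le\EE[Z']$ because imposing independence across rows inflates the variance of the $\delta_i$ and hence the expected maximum. You instead use the deterministic pointwise domination $Z\le\max_{i>m,\,j}a_{ij}$ and compute the CDF of this unrestricted maximum directly as $F_D^{m(n-m)}$, bypassing any min-CDF computation. For the middle bound with exponent $n-1$, the paper invokes monotonicity of $\EE[\dhd(X,X^{(m)})]$ in $m$ and then specialises its outer bound to $m=1$, where $m(n-m)=n-1$; you instead fix an anchor column $j^\ast$ and bound $Z$ by the single-column maximum $\max_{i\ne j^\ast}a_{i,j^\ast}$.

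Both routes are valid under the stated independence model, and both correctly flag that model as a heuristic once the triangle inequality is in force. Your domination arguments are arguably cleaner, since each step $Z\le(\text{some larger maximum})$ holds sample-by-sample rather than relying on the paper's variance-inflation intuition. One small improvement: your anchor-column maximum really only needs the $n-m$ entries with $i>m$ (the others contribute nothing to $Z$), so you already get exponent $n-m\le n-1$ without extending the $F_D$ hypothesis to the intra-subsample distances; this would remove the secondary caveat you flag.
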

\begin{proof}
  See Section~\ref{sec:Proof expectation} in the supplementary materials.
\end{proof}
From Eq.~\ref{Eq:upper bound}, we obtain $\EE\mleft[ \dhd(\pointcloud,
\pointcloud^{m}) \mright] = 0$ as $m \to n$,
so the expected value converges as the subsample size approaches
the total sample size\footnote{For $m = n$, the two integrals
switch their order as $m(n-m) = 0 < n - 1$ (for $n > 1$).}.
We conclude that our subsampling approach results in point clouds that are suitable proxies for the
large-scale topological structures of the point cloud $\pointcloud$.

\section{Related Work}\label{sec:Related work}

Computational topology and persistent homology~(PH) have started gaining
traction in several areas of machine learning research.
PH is often used as as \emph{post hoc} method for analysing topological characteristics of
data sets.
Thus, there are several methods that compare topological features of high-dimensional
spaces with different embeddings to assess the fidelity and quality of
a specific embedding scheme~\citep{Rieck15a, Rieck15b, Paul17, Khrulkov18, Yan18}.
PH can also be used to characterise the training of neural
networks~\citep{Guss18, Rieck19a}, as well as their decision
boundaries~\citep{Ramamurthy19}.
Our method differs from all these publications in that we are able to obtain gradient
information to \emph{update} a model while training.
Alternatively, topological features can be integrated into classifiers
to improve classification performance. \citet{Hofer17} propose a neural
network layer that learns projections of persistence diagrams,
which can subsequently be used as feature descriptors to classify
structured data.
Moreover, several vectorisation strategies for persistence diagrams
exist~\citep{Carriere15, Adams17}, making it possible to use them in
kernel-based classifiers. These strategies have
been subsumed~\citep{Carriere19} in a novel architecture based on
deep sets.
The commonality of these approaches is that they treat persistence
diagrams as being \emph{fixed}; while they are capable of learning
suitable parameters for classifying them, they cannot adjust input data
to better approximate a certain topology.

Such topology-based adjustments have only recently become feasible.
\citet{Poulenard18} demonstrated how to optimise real-valued
functions based on their topology. This constitutes the first approach
for aligning persistence diagrams by modifying input data; it requires
the connectivity of the data to be known, and the optimised functions
have to be node-based and scalar-valued.
By contrast, our method works \emph{directly} on distances and sidesteps
connectivity calculations via the Vietoris--Rips complex. \citet{Chen19}
use a similar optimisation technique to regularise the decision boundary
of a classifier. However, this requires discretising the space, which can
be computationally expensive.
\citet{Hofer19a}, the closest work to ours, also presents
a differentiable loss term. Their formulation enforces a \emph{single}
scale, referred to as~$\eta$, on the latent space. The learned encoding
is then applied to a one-class learning task in which a scoring function is
calculated based on the pre-defined scale. By contrast, the goal of our
loss term is to support the model in learning a latent encoding that
\emph{best preserves} the data space topology in said latent space,
which we use for dimensionality reduction. We thus target a different
task, and can preserve \emph{multiple} scales~(those selected through
the filtration process) that are present in the data domain.

\section{Experiments}

Our main task is to learn a latent space in an unsupervised manner such
that topological features of the data space, measured using persistent
homology approximations on every batch, are preserved as much as
possible.

\subsection{Experimental Setup}\label{sec:Setup}

Subsequently, we briefly describe our data sets and evaluation metrics.
Please refer to the supplementary materials for technical
details~(calculation, hyperparameters, etc.).

\subsubsection{Data Sets}
%
We generate a \data{Spheres} data set that consists of ten high-dimensional
\mbox{$100$-spheres} living in a \mbox{$101$-dimensional} space that are
enclosed by one larger sphere that consists of the same number of points
as the total of inner spheres~(see Section~\ref{Supp:Datasets} for more
details).
We also use three image data sets~(\data{MNIST}, \data{Fashion-MNIST},
and \data{CIFAR-10}), which are particularly amenable to our
topology-based analysis because real-world images are known to lie
\emph{on} or \emph{near} low-dimensional manifolds~\citep{Lee03,
Peyre09}.

\subsubsection{Baselines \& Training Procedure}
%
We compare our approach with several dimensionality reduction
techniques, including UMAP~\citep{mcinnes2018umap},
\mbox{t-SNE}~\citep{maaten2008visualizing},
Isomap~\citep{tenenbaum2000global}, PCA, as well as standard
autoencoders~(AE). We apply our proposed topological constraint to this
standard autoencoder architecture~(TopoAE).

For comparability and
interpretability, each method is restricted to two latent dimensions.
We split each data set into training and testing~(using the predefined
split if available; 90\% versus 10\% otherwise).
Additionally, we remove 15\% of the training split as a validation data
set for tuning the hyperparameters.
We normalised our topological loss term by the batch size $m$ in order
to disentangle $\lambda$ from it.
All autoencoders employ batch-norm and are optimised using
ADAM~\citep{kingma2014adam}. Since \mbox{t-SNE} is not intended to be
applied to previously unseen test samples, we evaluate this
method only on the train split.
In addition, significant computational scaling issues prevent us from
running a hyperparameter search for Isomap on real-world data sets, so
we only compare this algorithm on the synthetic data set.
Please refer to Section~\ref{Supp:Arch and Hyper} for more details on
architectures and hyperparameters.

\begin{figure}[t]
  \centering
  \subcaptionbox{PCA}{%
    \includegraphics[width=0.50\linewidth]{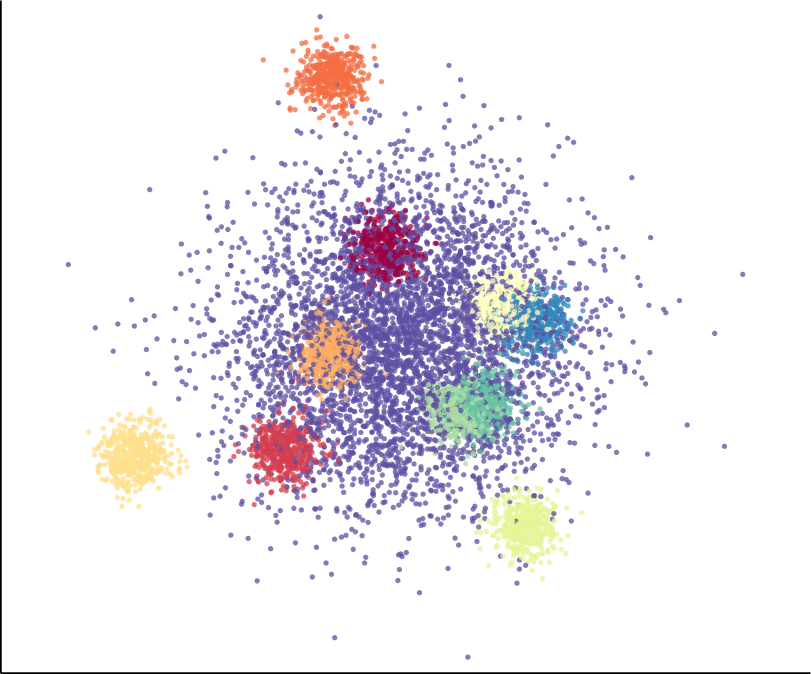}%
  }%
  \subcaptionbox{Isomap}{%
    \includegraphics[width=0.50\linewidth]{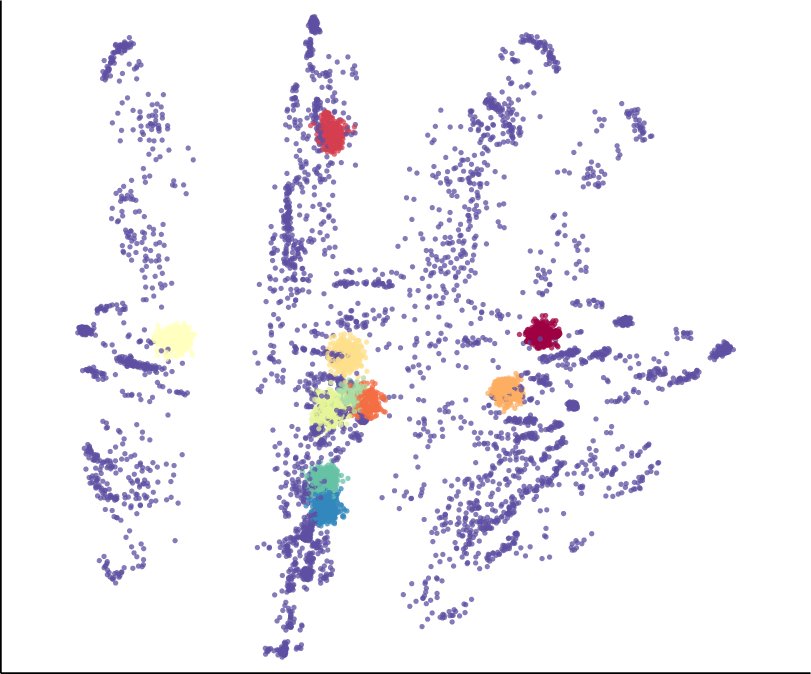}%
  }\\
  \subcaptionbox{t-SNE\label{sfig:Spheres t-SNE}}{%
    \includegraphics[width=0.50\linewidth]{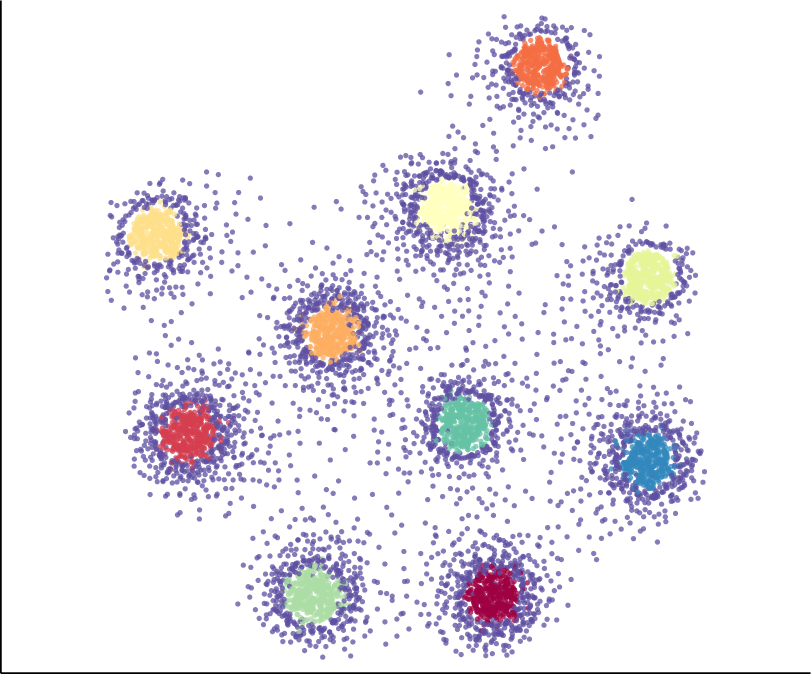}%
  }%
  \subcaptionbox{UMAP\label{sfig:Spheres UMAP}}{%
    \includegraphics[width=0.50\linewidth]{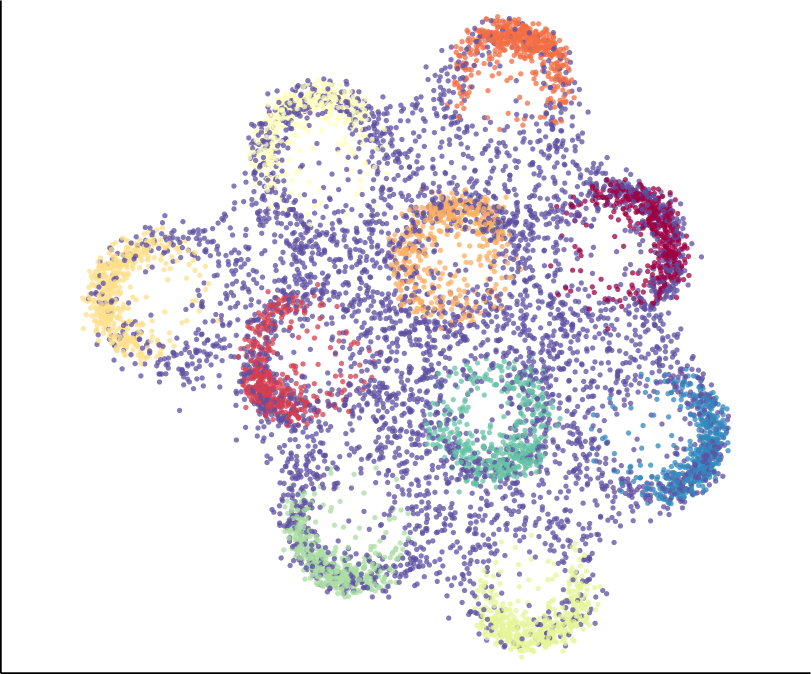}%
  }\\
  \subcaptionbox{AE}{%
    \includegraphics[width=0.50\linewidth]{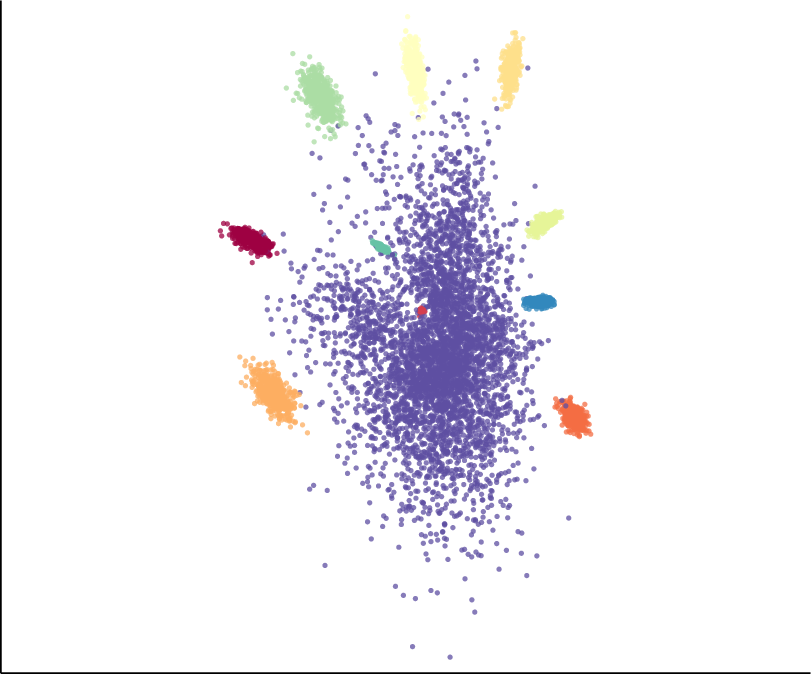}%
  }%
  \subcaptionbox{TopoAE\label{sfig:Spheres TAE}}{%
    \includegraphics[width=0.50\linewidth]{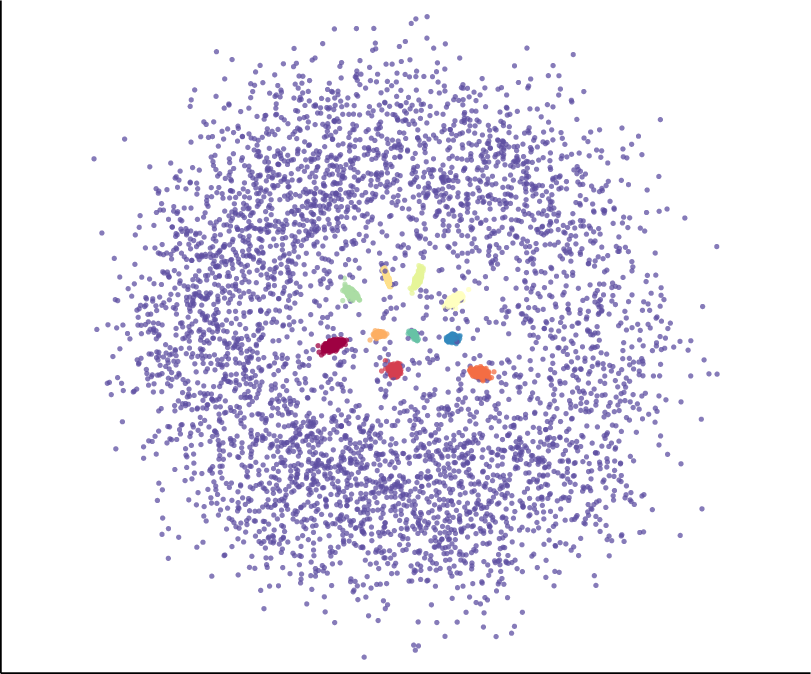}
  }%
  \caption{%
    Latent representations of the \data{Spheres} data set. Only our
    method is capable of representing the complicated nesting
    relationship inherent to the data; \mbox{t-SNE}, for example, tears
    the original data apart. For TopoAE, we used a batch size of $28$.
    Please refer to Figure~\ref{fig:Spheres all methods} in the supplementary materials for an enlarged version.
  }
  \label{fig:Spheres selected methods}
\end{figure}

\subsubsection{Evaluation}
%
We evaluate the quality of latent representations in terms of
\begin{inparaenum}[(1)]
  \item low-dimensional visualisations,
  \item dimensionality reduction quality metrics~(evaluated between
    input data and \emph{latent} codes), and
  \item reconstruction errors~(Data MSE; evaluated between input and
    \emph{reconstructed} data), provided that invertible transformations
    are available\footnote{Invertible transformations are available
    for PCA and all autoencoder-based methods.}.
\end{inparaenum}
For~(2), we consider several measures~(please refer to
Section~\ref{sec:Quality measures} for more details). First, we
calculate $\dkl_{\sigma}$, the Kullback--Leibler divergence between
the density estimates of the input and latent space, based on density
estimates~\citep{Chazal11, Chazal14a}, where $\sigma \in \real_{> 0}$
denotes the length scale of the Gaussian kernel, which is varied to
account for multiple data scales. We chose minimising $\dkl_{0.1}$ as
our hyperparameter search objective.
Furthermore, we calculate common non-linear dimensionality
reduction~(NLDR) quality metrics, which use the pairwise distance matrices of the input and
the \emph{latent} space~(as indicated by the ``$\ell$'' in the
abbreviations), namely
\begin{inparaenum}[(1)]
  \item the \emph{root mean square error}~($\ell$-RMSE), which---despite
    its name---is not related to the reconstruction error of the
    autoencoder but merely measures to what extent the two distributions
    of distances coincide,
  \item the \emph{mean relative rank error}~($\ell$-MRRE),
  \item the \emph{continuity}~($\ell$-Cont), and
  \item the \emph{trustworthiness}~($\ell$-Trust)
  .
\end{inparaenum}
The reported measures are computed on the test
splits~(except for \mbox{t-SNE} where no transformation between splits
is available, so we report the measures on a random subsample of the
train split, preserving the cardinality of the test split).

\begin{figure}[!ht]
  \centering

  \captionsetup[sub]{skip=3pt}

  \subcaptionbox*{\small\data{Fashion-MNIST}}{%
    \hspace*{0.33\linewidth}
  }%
  \subcaptionbox*{\small\data{MNIST}}{%
    \hspace*{0.33\linewidth}
  }%
  \subcaptionbox*{\small\data{CIFAR-10}}{%
    \hspace*{0.33\linewidth}
  }\\[0.25cm]
  \subcaptionbox*{}{%
    \includegraphics[width=0.33\linewidth]{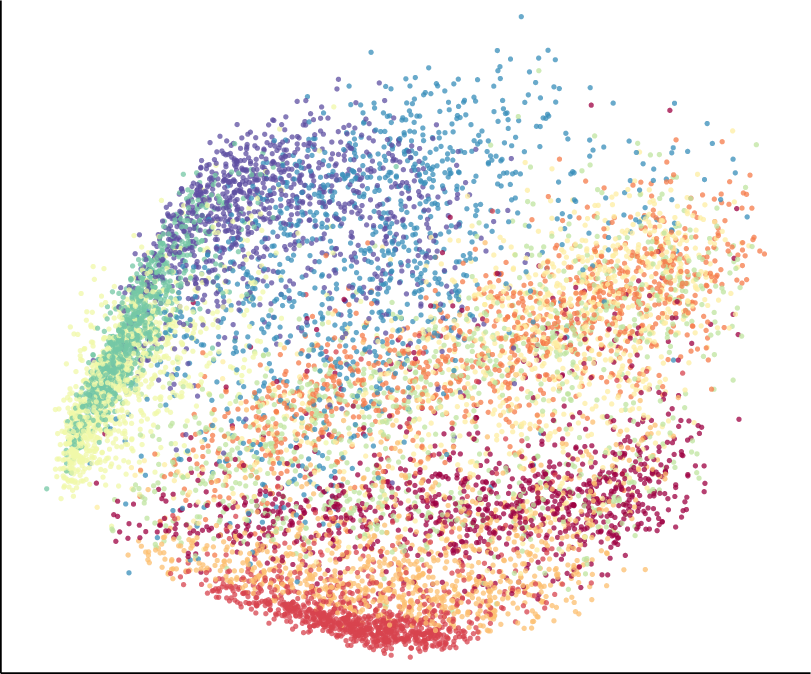}%
  }%
  \subcaptionbox*{PCA}{%
    \includegraphics[width=0.33\linewidth]{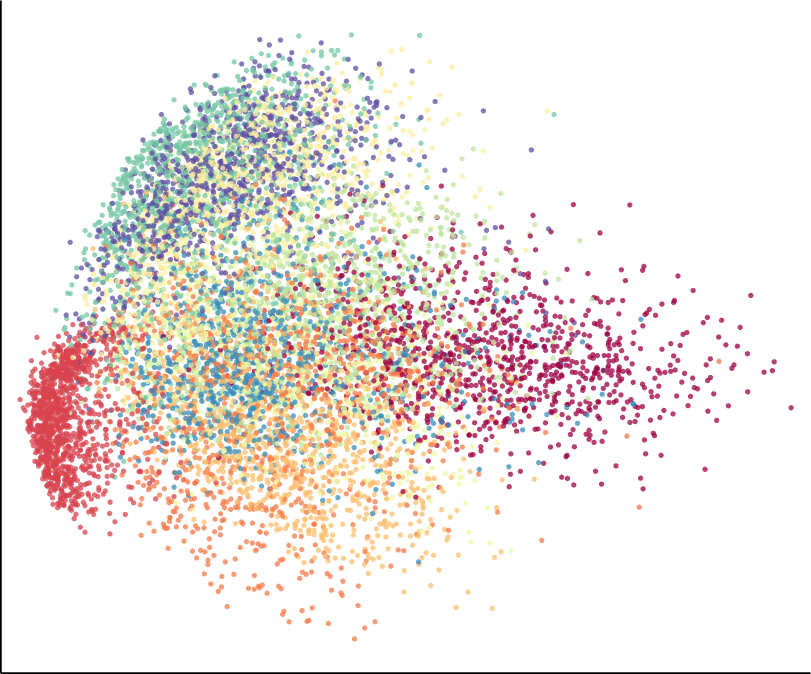}%
  }%
  \subcaptionbox*{}{%
    \includegraphics[width=0.33\linewidth]{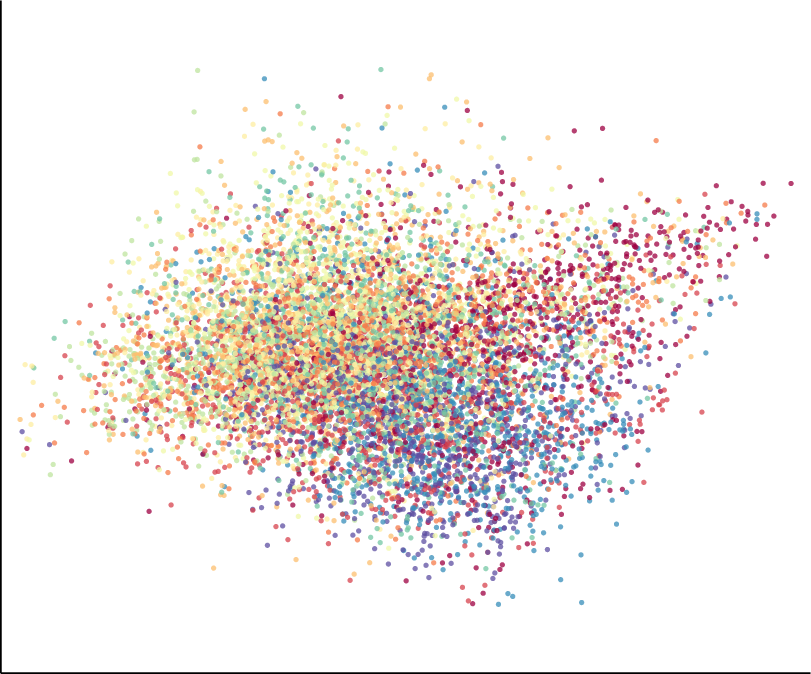}%
  }\\
  \subcaptionbox*{}{%
    \includegraphics[width=0.33\linewidth]{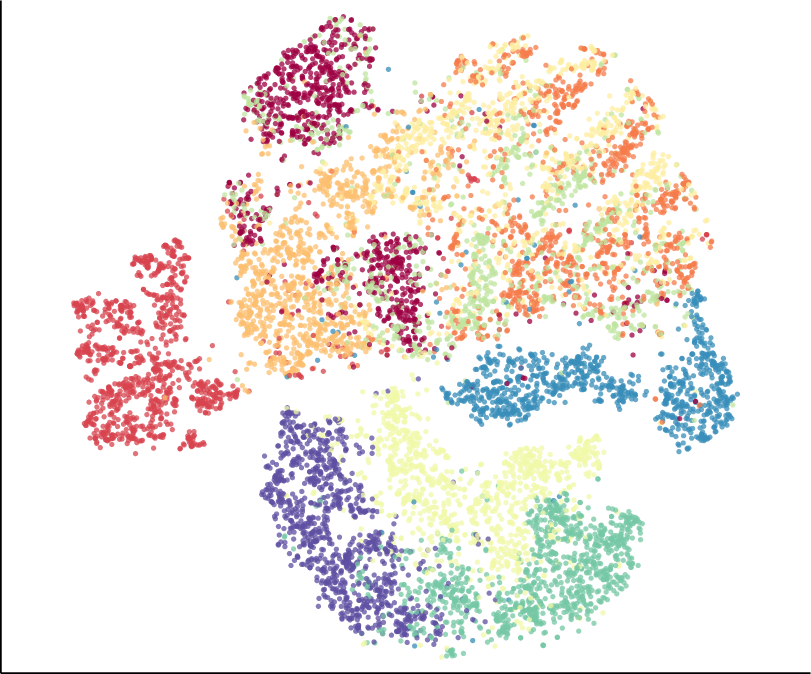}%
  }%
  \subcaptionbox*{t-SNE}{%
    \includegraphics[width=0.33\linewidth]{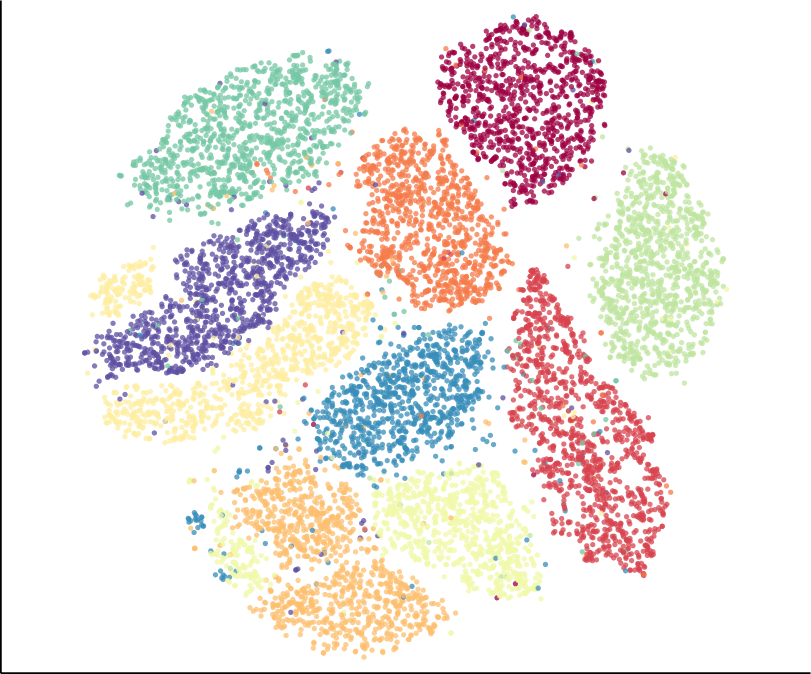}%
  }%
  \subcaptionbox*{}{%
      \includegraphics[width=0.33\linewidth]{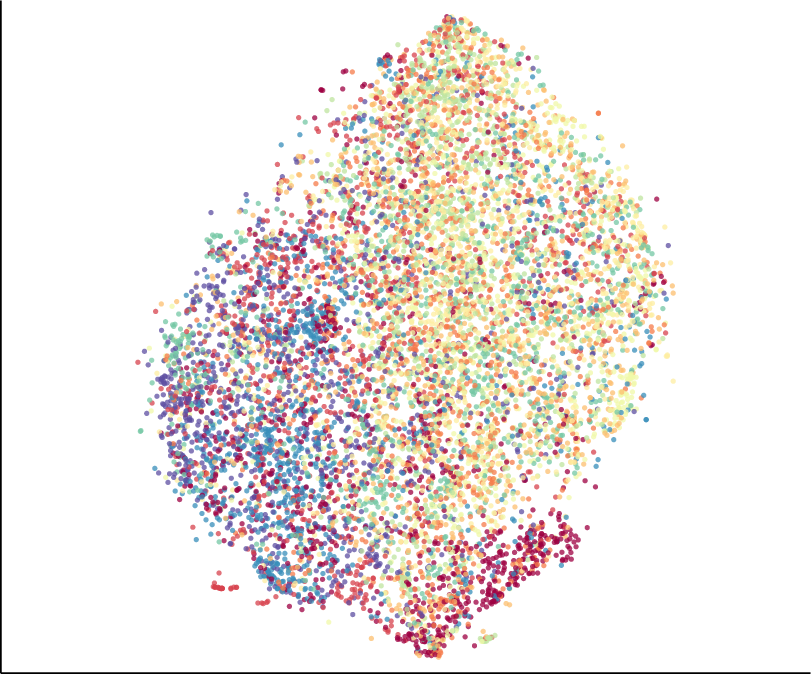}%
  }\\
  \subcaptionbox*{}{%
    \includegraphics[width=0.33\linewidth]{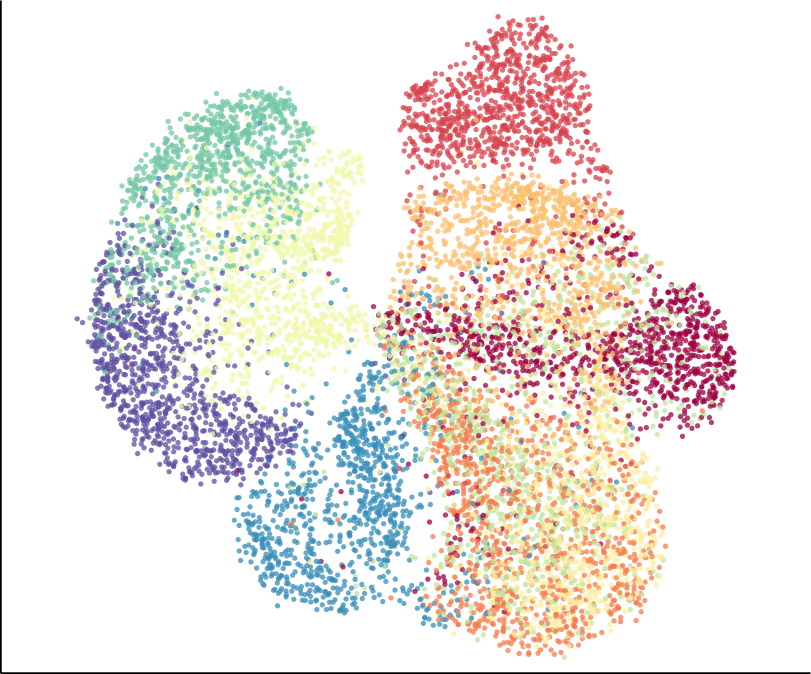}%
  }%
  \subcaptionbox*{UMAP}{%
    \includegraphics[width=0.33\linewidth]{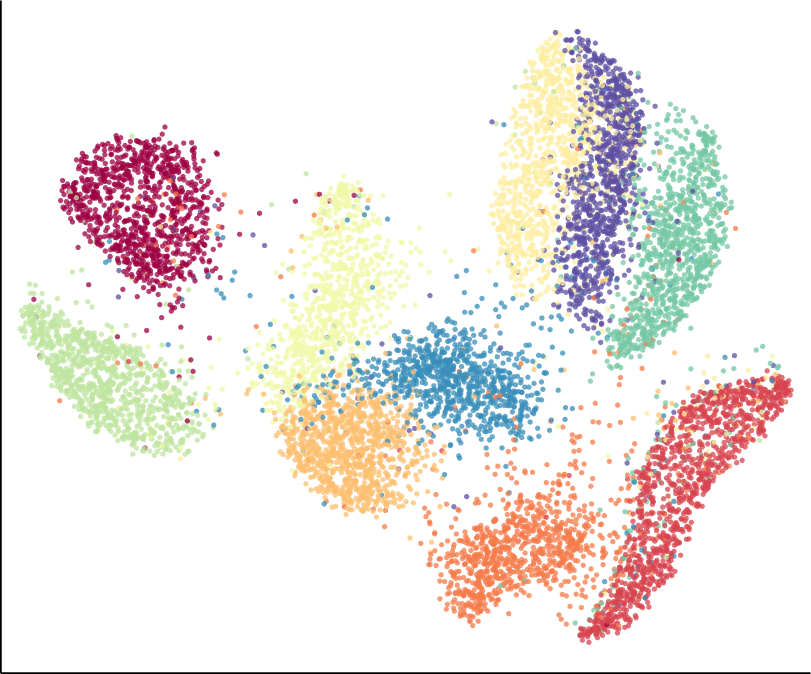}%
  }%
  \subcaptionbox*{}{%
    \includegraphics[width=0.33\linewidth]{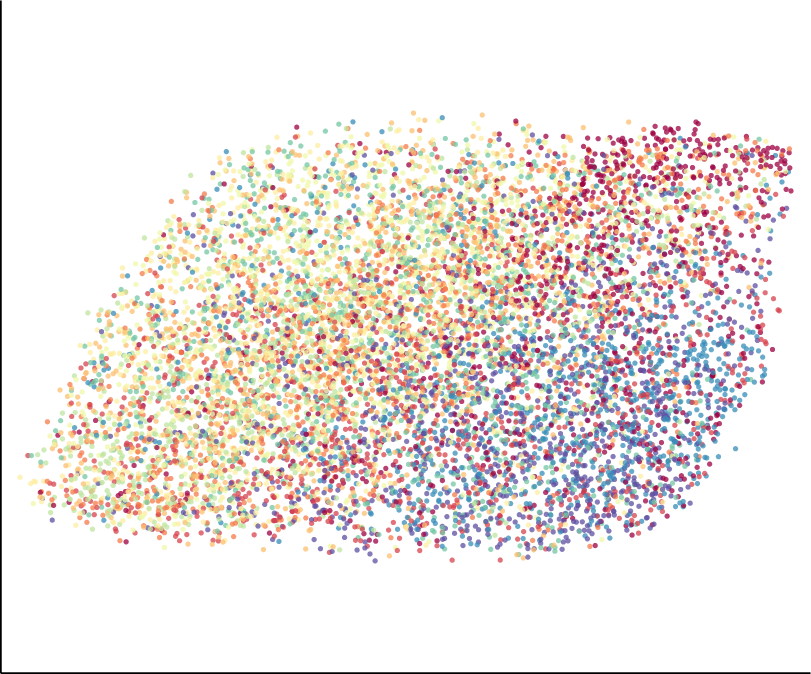}%
  }\\
  \subcaptionbox*{}{%
    \includegraphics[width=0.33\linewidth]{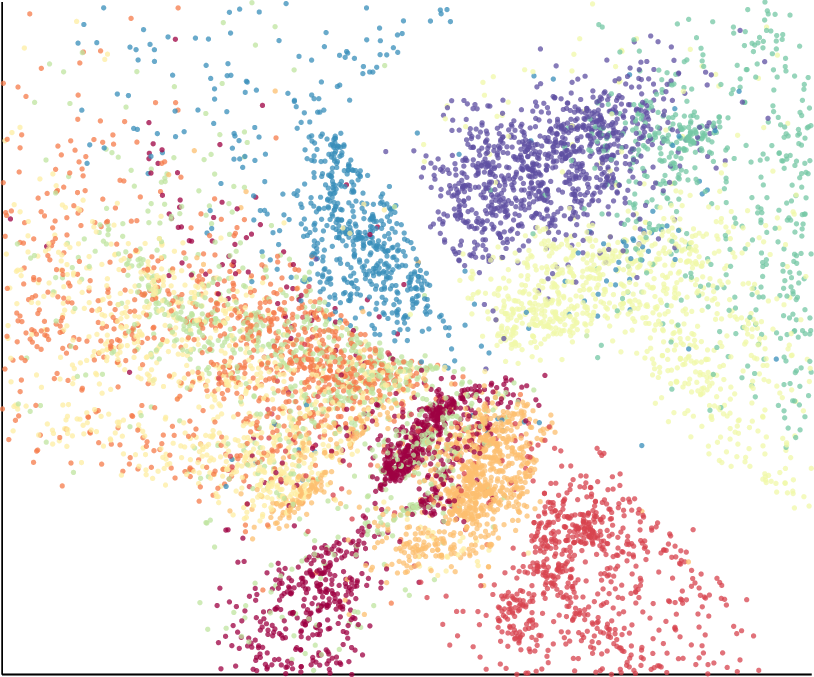}%
  }%
  \subcaptionbox*{AE}{%
    \includegraphics[width=0.33\linewidth]{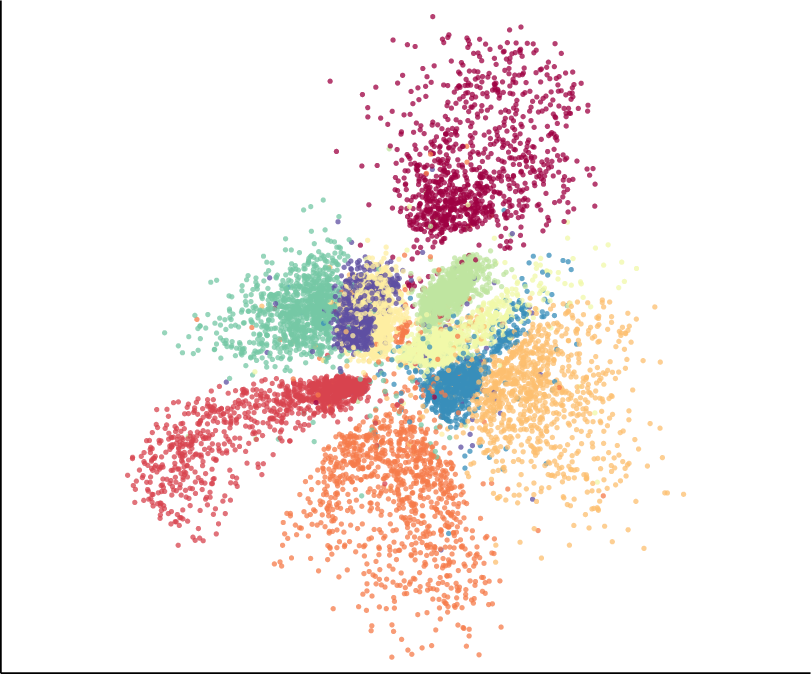}%
  }%
  \subcaptionbox*{}{%
    \includegraphics[width=0.33\linewidth]{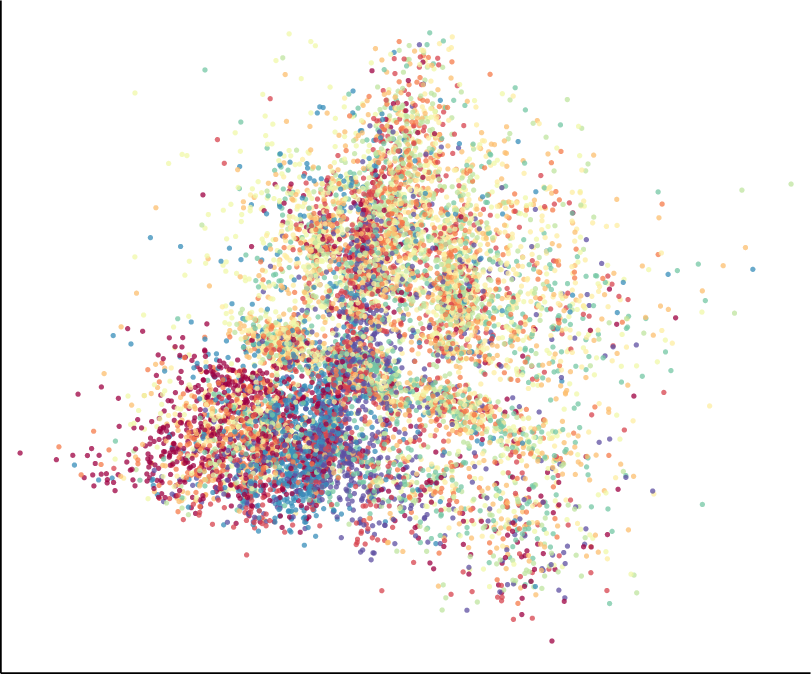}%
  }\\
  \subcaptionbox*{}{%
    \includegraphics[width=0.33\linewidth]{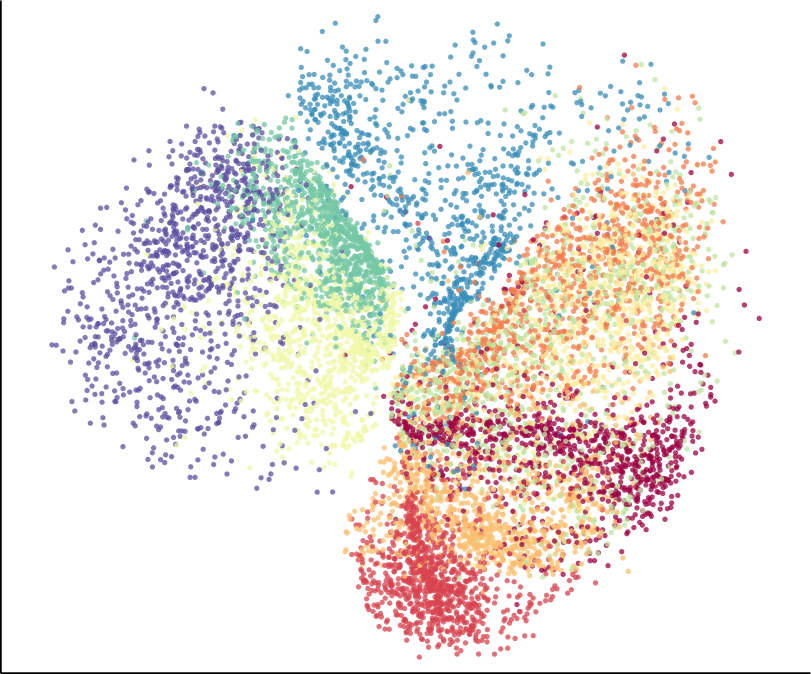}
  }%
  \subcaptionbox*{TopoAE}{%
    \includegraphics[width=0.33\linewidth]{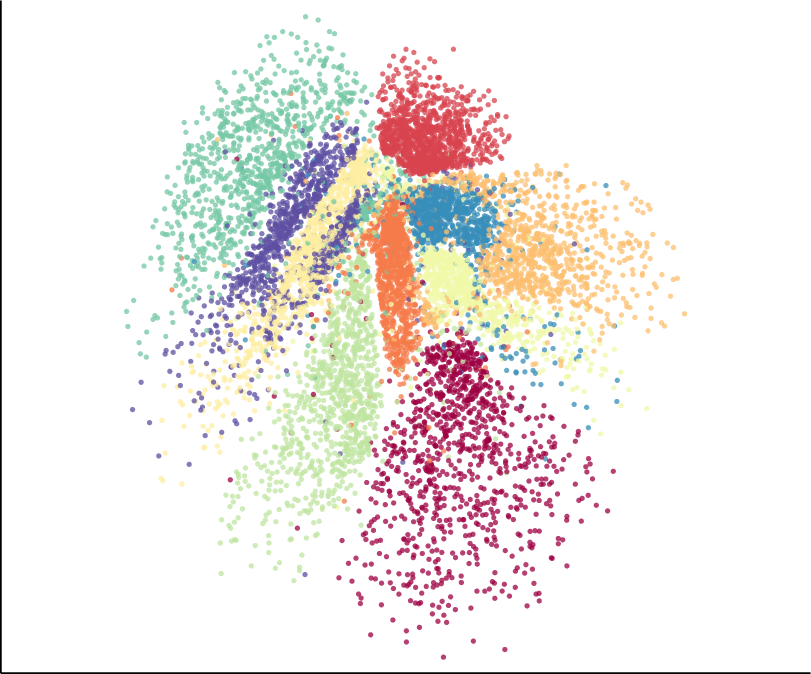}%
  }%
  \subcaptionbox*{}{%
    \includegraphics[width=0.33\linewidth]{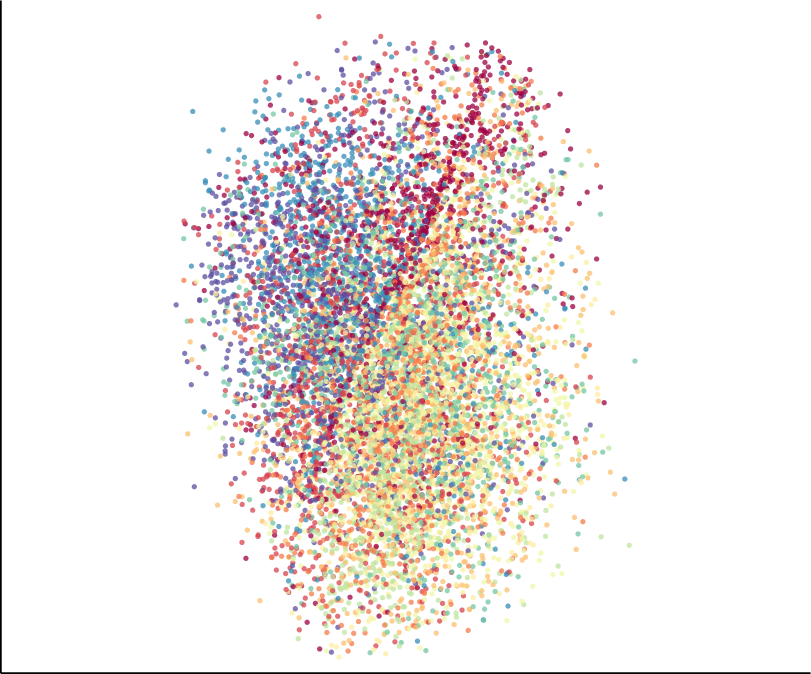}%
  }%
  \caption{%
    Latent representations of the \data{Fashion-MNIST}~(left column),
    \data{MNIST}~(middle column), \data{CIFAR-10}~(right column) data sets.
    The respective batch size values for our method are~$(95, 126, 82)$.
    Please refer to Figures~\ref{fig:FMNIST all methods}, \ref{fig:MNIST
    all methods}, and \ref{fig:CIFAR all methods} in the supplementary
    materials for enlarged versions.
  }
  \label{fig:Summary results}
\end{figure}

\subsection{Results}\label{sec:Results}

Next to a \emph{quantitative} evaluation in terms of various quality
metrics, we also discuss \emph{qualitative} results in terms of
visualisations, which are interpretable in case the ground truth
manifold is known.

\begin{table*}[tbp]
    \centering
{\fontsize{10}{11}\selectfont 
\setlength{\tabcolsep}{2pt}
\begin{tabular}{llrrrrrrrr}
\toprule
 Data set &   Method & $\dkl_{0.01}  $ &  $\dkl_{0.1} $ &   $\dkl_{1} $ &  $\ell$-MRRE &  $\ell$-Cont  & $\ell$-Trust  & $\ell$-RMSE & Data MSE \\
\midrule
\multirow{6}*{\data{Spheres}}     &	Isomap            &                 0.181  &       \second{0.420}  &      \second{0.00881}  &          \second{0.246}  &    \second{0.790}  &   \second{0.676}  &  10.4  &                       --  \\
 & PCA                &        0.332  &       0.651  &      0.01530  &         0.294  &  0.747  &      0.626  &  11.8  &       0.9610  \\
 & TSNE              &        \second{0.152}  &       0.527  &      0.01271  &        \first{0.217}  &   0.773  &      \first{0.679}  &   \first{8.1}  &                       -- \\
 & UMAP              &        0.157 &       0.613  &      0.01658   &         0.250  &   0.752   &      0.635  &   \second{9.3}  &                       --  \\
 & AE           &            0.566  &       0.746  &      0.01664  &        0.349  &     0.607   &    0.588  &  13.3  &       \first{0.8155} \\
 & TopoAE &              \first{0.085}  &       \first{0.326}  &      \first{0.00694}  &        0.272  &     \first{0.822}  &    0.658  &  13.5  &  \second{0.8681}  \\
\midrule 
\multirow{5}*{\data{F-MNIST}} & PCA               &              \first{0.356}  &       \first{0.052}  &      \first{0.00069}  &        0.057  &   0.968  &      0.917  &   \first{9.1}  &       0.1844  \\
& TSNE              &              0.405  &       0.071  &      0.00198  &          \first{0.020}  & 0.967  &      \second{0.974}  &  41.3  &                       -- \\
& UMAP              &                0.424  &       0.065  &      0.00163  &        0.029  &   \first{0.981}  &     0.959   &  \second{13.7}  &                       -- \\
& AE           &       0.478   &       0.068  &      0.00125  &        \second{0.026}  &  0.968  &       \first{0.974}  &  20.7  &       \first{0.1020}  \\
& TopoAE &            \second{0.392}  &       \second{0.054}  &      \second{0.00100}  &         0.032   &    \second{0.980}  &    0.956  &  20.5   &    \second{0.1207}  \\
 \midrule
 \multirow{5}*{\data{MNIST}} & PCA               &        0.389  &       0.163  &  0.00160  &            0.166   & 0.901  &      0.745  &  \first{13.2}  &       0.2227  \\
& TSNE            &        \first{0.277}  &       \second{0.133}  &      0.00214  &         \first{0.040}  &   0.921  &     \first{0.946}  &  22.9  &                       -- \\
&  UMAP              &             \second{0.321}  &       0.146   &      0.00234  &         \second{0.051}  &   \first{0.940}  &     \second{0.938}  &  \second{14.6}  &                       -- \\
&  AE           &               0.620  &       0.155  &      \second{0.00156}  &       0.058  &    0.913   &     0.937  &  18.2   &        \first{0.1373}  \\
& TopoAE  &              0.341  &       \first{0.110}  &      \first{0.00114}  &         0.056  &    \second{0.932}  &    0.928  &  19.6   &       \second{0.1388}  \\
 \midrule
\multirow{5}*{\data{CIFAR}} & PCA               &            \second{0.591}  &       \second{0.020}  &      \first{0.00023}  &        0.119  &  \first{0.931}  &      0.821  &  \first{17.7}  &       0.1482  \\
& TSNE              &                 0.627 &       0.030  &      0.00073  &         \first{0.103}  & 0.903  &       \second{0.863}  &  \second{25.6}  &                       -- \\
&  UMAP              &               0.617  &       0.026  &      0.00050  &         0.127  &    0.920  &    0.817  &  33.6   &                       -- \\
&  AE           &                0.668  &       0.035   &      0.00062  &         0.132  &  0.851   &      \first{0.864}  &  36.3   &       \second{0.1403}  \\
&  TopoAE  &                 \first{0.556}  &       \first{0.019}  &      \second{0.00031}  &          \second{0.108}  &  \second{0.927}  &      0.845  &  37.9  &        \first{0.1398}  \\
\bottomrule
\end{tabular}
}

    \caption{%
      Embedding quality according to multiple evaluation metrics~(Section~\ref{sec:Setup}).
      The hyperparameters of all tunable methods were selected to
      minimise the objective $\dkl_{0.1}$. For each criterion, the
      winner is shown in bold and underlined, the runner-up in bold.
      Please refer to Supplementary Table~\ref{tab:supp_results}
      for more $\sigma$ scales and variance estimates.
      The column ``Data MSE'' indicates the reconstruction error. It is
      included to demonstrate that applying our loss term has no adverse
      effects.
    }
    \label{tab: table_results}
\end{table*}

\subsubsection{Quantitative Results}
%
Table~\ref{tab: table_results} reports the quantitative results.
Overall, we observe that our method is capable of preserving the data
density over multiple length scales~(as measured by $\dkl$).
Furthermore, we find that TopoAE displays competitive continuity
values~($\ell$-Cont) and reconstruction errors~(Data MSE). The latter is
particularly relevant as it demonstrates that imposing our
topological constraints does not result in large impairments when
reconstructing the input space.

The remaining classical measures favour the baselines~(foremost the
\emph{train}~(!) performance of \mbox{t-SNE}).
However, we will subsequently see by inspecting the latent spaces that
those classic measures \emph{fail} to detect the relevant structural
information, as exemplified with known ground truth manifolds, such as
the \data{Spheres} data set.

\subsubsection{Visualisation of Latent Spaces}

For the \data{Spheres} data set~(Figure~\ref{fig:Spheres selected
methods}), we observe that only our method is capable of assessing the
nesting relationship of the high-dimensional spheres correctly.
By contrast, \mbox{t-SNE} ``cuts open'' the enclosing sphere,
distributing most of its points around the remaining spheres. We see
that the $\dkl$-divergence confirms the visual assessment that only our
proposed method preserves the relevant structure of this data set.
Several classical evaluation measures, however, favour \mbox{t-SNE},
even though this method fails to capture the global structure and
nesting relationship of the enclosing sphere manifold accounting for
half of the data set.

On \data{Fashion-MNIST}~(Figure~\ref{fig:Summary results}, leftmost
column), we see that, as opposed to AE, which is purely driven by the reconstruction
error, our method has the additional objective of \emph{preserving}
structure. Here, the constraint helps the regularised autoencoder to ``organise'' the
latent space, resulting in a comparable pattern as in UMAP, which is also
topologically motivated~\citep{mcinnes2018umap}. 
Furthermore, we observe that \mbox{t-SNE} tends to fragment certain
classes~(dark orange, red) into multiple distinct subgroups. This
likely does not reflect the underlying manifold structure, but
constitutes an artefact frequently observed with this method.
For \data{MNIST}, the latent embeddings~(Figure~\ref{fig:Summary
results}, middle column) demonstrate that the non-linear
competitors---mostly by pulling apart distinct classes---lose some of
the relationship information \emph{between} clusters when comparing
against our proposed method or PCA.
Finally, we observe that \data{CIFAR-10}~(Figure~\ref{fig:Summary
results}, rightmost column), is challenging to embed in two latent
dimensions in a purely unsupervised manner.
Interestingly, our method~(consistently, i.e.\ over all runs) was able to
identify a \emph{linear substructure} that separates the latent space in
two additional groups of classes.

\section{Discussion and Conclusion}

We presented topological autoencoders, a novel method for preserving
topological information, measured in terms of persistent homology, of
the input space when learning latent representations with deep neural
networks.
Under weak theoretical assumptions, we showed how our persistent
homology calculations can be combined with backpropagation;
moreover, we proved that approximating persistent homology on the level of mini-batches
is theoretically justified.

In our experiments, we observed that our
method is uniquely able to capture spatial relationships between
nested high-dimensional spheres.
This is relevant, as the ability to cope with \emph{several} manifolds
in the domain of manifold learning still remains a challenging task. On
real-world data sets, we observed that our topological loss leads to
competitive performance in terms of numerous quality metrics~(such as
a density preservation metric), while
\emph{not} adversely affecting the reconstruction error.
In both synthetic and real-world data sets, we obtain interesting and
interpretable representations, as our method does not merely pull apart different
classes, but tries to spatially arrange them meaningfully. Thus, we do
not observe mere distinct ``clouds'', but rather entangled structures,
which we consider to constitute a more meaningful representation of the
underlying manifolds~(an auxiliary analysis in
Supplementary Section~\ref{sec:Topological distances} confirms that our
method influences topological features, measured using PH, in
a beneficial manner).

\paragraph{Future work}
Our topological loss formulation is highly generalisable; it only
requires the existence of a distance matrix between individual
samples~(either globally, or on the level of batches).
As a consequence, our topological loss term can be directly integrated
into a variety of different architectures and is \emph{not} limited to
standard autoencoders. For instance, we can also apply our constraint to
variational setups~(see Figure~\ref{fig:VAE} for a sketch) or create
a topology-aware variant of principal component analysis~(dubbed
``TopoPCA''; see Table~\ref{tab:supp_results} for more details, as well
as Figures~\ref{fig:FMNIST all methods}, \ref{fig:MNIST all methods}, and \ref{fig:CIFAR all methods} for the corresponding
latent space representations).
Employing our loss term to more involved architectures will be
an exciting route for future work.  One issue with the calculation is
that, given the computational complexity of calculating
$\vr_{\epsilon}(\cdot)$, for higher-dimensional features, we would scale
progressively worse with increasing batch size. However, in our
low-dimensional setup, we observed that runtime tends to grow with
decreasing batch size, i.e.\ the mini-batch speed-up still dominates
runtime~(for more details concerning the effect of batch sizes, see
Supplementary Section~\ref{Supp:batch_size}).
In future work, scaling to higher dimensions could be mitigated by
approximating the calculation of persistent homology~\citep{Choudhary18,
Kerber13, Sheehy13} or by exploiting recent advances in parallelising
it~\citep{Bauer14, Lewis15}.
Another interesting extension would be to tackle classification
scenarios with topology-preserving loss terms. This might prove
challenging, however, because the goal in classification is to
increase class separability, which might be achieved by
\emph{removing} topological structures. This goal is therefore at odds
with our loss term that tries \emph{preserving} those structures.
We think that such an extension might require restricting the method to
a subset of scales~(namely those that do not impede class separability)
to be preserved in the data.

\subsubsection*{Acknowledgements}

The authors wish to thank Christian Bock for fruitful discussions and valuable feedback.

This project was supported by the grant \#2017‐110 of the Strategic
Focal Area ``Personalized Health and Related Technologies (PHRT)'' of
the ETH Domain for the SPHN/PHRT Driver Project ``Personalized Swiss
Sepsis Study'' and  the SNSF Starting Grant ``Significant Pattern
Mining''~(K.B., grant no.~155913).
Moreover, this work was funded in part by the Alfried Krupp Prize for Young
University Teachers of the Alfried Krupp von Bohlen und
Halbach-Stiftung~(K.B.).


\bibliography{main}

\begin{thebibliography}{53}
\providecommand{\natexlab}[1]{#1}
\providecommand{\url}[1]{\texttt{#1}}
\expandafter\ifx\csname urlstyle\endcsname\relax
  \providecommand{\doi}[1]{doi: #1}\else
  \providecommand{\doi}{doi: \begingroup \urlstyle{rm}\Url}\fi

\bibitem[Adams et~al.(2017)Adams, Emerson, Kirby, Neville, Peterson, Shipman,
  Chepushtanova, Hanson, Motta, and Ziegelmeier]{Adams17}
Adams, H., Emerson, T., Kirby, M., Neville, R., Peterson, C., Shipman, P.,
  Chepushtanova, S., Hanson, E., Motta, F., and Ziegelmeier, L.
\newblock Persistence images: {A} stable vector representation of persistent
  homology.
\newblock \emph{Journal of Machine Learning Research}, 18\penalty0
  (1):\penalty0 218--252, 2017.

\bibitem[Barannikov(1994)]{Barannikov94}
Barannikov, S.~A.
\newblock The framed {M}orse complex and its invariants.
\newblock \emph{Advances in Soviet Mathematics}, 21:\penalty0 93--115, 1994.

\bibitem[Bauer et~al.(2014)Bauer, Kerber, and Reininghaus]{Bauer14}
Bauer, U., Kerber, M., and Reininghaus, J.
\newblock Distributed computation of persistent homology.
\newblock In McGeoch, C.~C. and Meyer, U. (eds.), \emph{Proceedings of the
  Sixteenth Workshop on Algorithm Engineering and Experiments~(ALENEX)}, pp.\
  31--38. Society for Industrial and Applied Mathematics, 2014.

\bibitem[Bibal \& Fr{\'e}nay(2019)Bibal and Fr{\'e}nay]{Bibal19}
Bibal, A. and Fr{\'e}nay, B.
\newblock Measuring quality and interpretability of dimensionality reduction
  visualizations.
\newblock \emph{Safe Machine Learning Workshop at {ICLR}}, 2019.

\bibitem[Burago et~al.(2001)Burago, Burago, and Ivanov]{Burago01}
Burago, D., Burago, Y., and Ivanov, S.
\newblock \emph{A course in metric geometry}, volume~33 of \emph{Graduate
  Studies in Mathematics}.
\newblock American Mathematical Society, 2001.

\bibitem[Carri\`{e}re et~al.(2015)Carri\`{e}re, Oudot, and
  Ovsjanikov]{Carriere15}
Carri\`{e}re, M., Oudot, S.~Y., and Ovsjanikov, M.
\newblock Stable topological signatures for points on {3D} shapes.
\newblock In \emph{Proceedings of the Eurographics Symposium on Geometry
  Processing~(SGP)}, pp.\  1--12, Aire-la-Ville, Switzerland, 2015.
  Eurographics Association.

\bibitem[Carri{\`e}re et~al.(2019)Carri{\`e}re, Chazal, Ike, Lacombe, Royer,
  and Umeda]{Carriere19}
Carri{\`e}re, M., Chazal, F., Ike, Y., Lacombe, T., Royer, M., and Umeda, Y.
\newblock {P}ers{L}ay: {A} neural network layer for persistence diagrams and
  new graph topological signatures.
\newblock \emph{arXiv e-prints}, art. arXiv:1904.09378, 2019.

\bibitem[Chazal et~al.(2009)Chazal, Cohen-Steiner, Guibas, M{\'e}moli, and
  Oudot]{Chazal09}
Chazal, F., Cohen-Steiner, D., Guibas, L.~J., M{\'e}moli, F., and Oudot, S.~Y.
\newblock {G}romov--{H}ausdorff stable signatures for shapes using persistence.
\newblock \emph{Computer Graphics Forum}, 28\penalty0 (5):\penalty0 1393--1403,
  2009.

\bibitem[Chazal et~al.(2011)Chazal, Cohen-Steiner, and M{\'e}rigot]{Chazal11}
Chazal, F., Cohen-Steiner, D., and M{\'e}rigot, Q.
\newblock Geometric inference for probability measures.
\newblock \emph{Foundations of Computational Mathematics}, 11\penalty0
  (6):\penalty0 733--751, 2011.

\bibitem[Chazal et~al.(2014{\natexlab{a}})Chazal, de~Silva, and
  Oudot]{Chazal14}
Chazal, F., de~Silva, V., and Oudot, S.~Y.
\newblock Persistence stability for geometric complexes.
\newblock \emph{Geometri{\ae} Dedicata}, 173\penalty0 (1):\penalty0 193--214,
  2014{\natexlab{a}}.

\bibitem[Chazal et~al.(2014{\natexlab{b}})Chazal, Fasy, Lecci, Michel, Rinaldo,
  and Wasserman]{Chazal14a}
Chazal, F., Fasy, B.~T., Lecci, F., Michel, B., Rinaldo, A., and Wasserman, L.
\newblock Robust topological inference: {D}istance to a measure and kernel
  distance.
\newblock \emph{arXiv e-prints}, art. arXiv:1412.7197, 2014{\natexlab{b}}.

\bibitem[Chazal et~al.(2015{\natexlab{a}})Chazal, Fasy, Lecci, Michel, Rinaldo,
  and Wasserman]{Chazal15a}
Chazal, F., Fasy, B., Lecci, F., Michel, B., Rinaldo, A., and Wasserman, L.
\newblock Subsampling methods for persistent homology.
\newblock In Bach, F. and Blei, D. (eds.), \emph{Proceedings of the 32nd
  International Conference on Machine Learning~(ICML)}, volume~37 of
  \emph{Proceedings of Machine Learning Research}, pp.\  2143--2151. PMLR,
  2015{\natexlab{a}}.

\bibitem[Chazal et~al.(2015{\natexlab{b}})Chazal, Glisse, Labru{{\`e}}re, and
  Michel]{Chazal15b}
Chazal, F., Glisse, M., Labru{{\`e}}re, C., and Michel, B.
\newblock Convergence rates for persistence diagram estimation in topological
  data analysis.
\newblock \emph{Journal of Machine Learning Research}, 16:\penalty0 3603--3635,
  2015{\natexlab{b}}.

\bibitem[Chen et~al.(2019)Chen, Ni, Bai, and Wang]{Chen19}
Chen, C., Ni, X., Bai, Q., and Wang, Y.
\newblock A topological regularizer for classifiers via persistent homology.
\newblock In Chaudhuri, K. and Sugiyama, M. (eds.), \emph{Proceedings of
  Machine Learning Research}, volume~89 of \emph{Proceedings of Machine
  Learning Research}, pp.\  2573--2582. PMLR, 2019.

\bibitem[Choudhary et~al.(2018)Choudhary, Kerber, and Raghvendra]{Choudhary18}
Choudhary, A., Kerber, M., and Raghvendra, S.
\newblock Improved topological approximations by digitization.
\newblock \emph{arXiv e-prints}, art. arXiv:1812.04966, 2018.

\bibitem[Cohen-Steiner et~al.(2007)Cohen-Steiner, Edelsbrunner, and
  Harer]{Cohen-Steiner07}
Cohen-Steiner, D., Edelsbrunner, H., and Harer, J.
\newblock Stability of persistence diagrams.
\newblock \emph{Discrete {\&} Computational Geometry}, 37\penalty0
  (1):\penalty0 103--120, 2007.

\bibitem[Cormen et~al.(2009)Cormen, Leiserson, Rivest, and Stein]{Cormen09}
Cormen, T.~H., Leiserson, C.~E., Rivest, R.~L., and Stein, C.
\newblock \emph{Introduction to algorithms}.
\newblock MIT Press, Cambridge, MA, USA, 3rd edition, 2009.

\bibitem[Edelsbrunner \& Harer(2008)Edelsbrunner and Harer]{Edelsbrunner08a}
Edelsbrunner, H. and Harer, J.
\newblock Persistent homology---a survey.
\newblock In Goodman, J.~E., Pach, J., and Pollack, R. (eds.), \emph{Surveys on
  discrete and computational geometry: {T}wenty years later}, number 453 in
  Contemporary Mathematics, pp.\  257--282. American Mathematical Society,
  Providence, RI, USA, 2008.

\bibitem[Edelsbrunner et~al.(2002)Edelsbrunner, Letscher, and
  Zomorodian]{Edelsbrunner02}
Edelsbrunner, H., Letscher, D., and Zomorodian, A.~J.
\newblock Topological persistence and simplification.
\newblock \emph{Discrete {\&} Computational Geometry}, 28\penalty0
  (4):\penalty0 511--533, 2002.

\bibitem[Gracia et~al.(2014)Gracia, Gonz{\'a}lez, Robles, and
  Menasalvas]{Gracia14}
Gracia, A., Gonz{\'a}lez, S., Robles, V., and Menasalvas, E.
\newblock A methodology to compare dimensionality reduction algorithms in terms
  of loss of quality.
\newblock \emph{Information Sciences}, 270:\penalty0 1--27, 2014.

\bibitem[Guss \& Salakhutdinov(2018)Guss and Salakhutdinov]{Guss18}
Guss, W.~H. and Salakhutdinov, R.
\newblock On characterizing the capacity of neural networks using algebraic
  topology.
\newblock \emph{arXiv e-prints}, art. arXiv:1802.04443, 2018.

\bibitem[Hinton \& Salakhutdinov(2006)Hinton and
  Salakhutdinov]{hinton2006reducing}
Hinton, G.~E. and Salakhutdinov, R.~R.
\newblock Reducing the dimensionality of data with neural networks.
\newblock \emph{Science}, 313\penalty0 (5786):\penalty0 504--507, 2006.

\bibitem[Hofer et~al.(2017)Hofer, Kwitt, Niethammer, and Uhl]{Hofer17}
Hofer, C., Kwitt, R., Niethammer, M., and Uhl, A.
\newblock Deep learning with topological signatures.
\newblock In Guyon, I., Luxburg, U.~V., Bengio, S., Wallach, H., Fergus, R.,
  Vishwanathan, S., and Garnett, R. (eds.), \emph{Advances in Neural
  Information Processing Systems~30}, pp.\  1633--1643. Curran Associates,
  Inc., 2017.

\bibitem[Hofer et~al.(2019{\natexlab{a}})Hofer, Kwitt, Niethammer, and
  Dixit]{Hofer19a}
Hofer, C., Kwitt, R., Niethammer, M., and Dixit, M.
\newblock Connectivity-optimized representation learning via persistent
  homology.
\newblock In Chaudhuri, K. and Salakhutdinov, R. (eds.), \emph{Proceedings of
  the 36th International Conference on Machine Learning}, volume~97 of
  \emph{Proceedings of Machine Learning Research}, pp.\  2751--2760. PMLR,
  2019{\natexlab{a}}.

\bibitem[Hofer et~al.(2019{\natexlab{b}})Hofer, Graf, Rieck, Niethammer, and
  Kwitt]{Hofer19b}
Hofer, C.~D., Graf, F., Rieck, B., Niethammer, M., and Kwitt, R.
\newblock Graph filtration learning.
\newblock \emph{arXiv e-prints}, art. arXiv:1905.10996, 2019{\natexlab{b}}.

\bibitem[Kerber \& Sharathkumar(2013)Kerber and Sharathkumar]{Kerber13}
Kerber, M. and Sharathkumar, R.
\newblock Approximate {{\v{C}}ech} complexes in low and high dimensions.
\newblock \emph{arXiv e-prints}, art. arXiv:1307.3272, 2013.

\bibitem[Khrulkov \& Oseledets(2018)Khrulkov and Oseledets]{Khrulkov18}
Khrulkov, V. and Oseledets, I.
\newblock Geometry score: {A} method for comparing generative adversarial
  networks.
\newblock In Dy, J. and Krause, A. (eds.), \emph{Proceedings of the 35th
  International Conference on Machine Learning}, volume~80 of \emph{Proceedings
  of Machine Learning Research}, pp.\  2621--2629. PMLR, 2018.

\bibitem[Kingma \& Ba(2014)Kingma and Ba]{kingma2014adam}
Kingma, D.~P. and Ba, J.
\newblock Adam: A method for stochastic optimization.
\newblock \emph{arXiv e-prints}, art. arXiv:1412.6980, 2014.

\bibitem[Kurlin(2015)]{Kurlin15}
Kurlin, V.
\newblock A one-dimensional homologically persistent skeleton of an
  unstructured point cloud in any metric space.
\newblock \emph{Computer Graphics Forum}, 34\penalty0 (5):\penalty0 253--262,
  2015.

\bibitem[Lee et~al.(2003)Lee, Pedersen, and Mumford]{Lee03}
Lee, A.~B., Pedersen, K.~S., and Mumford, D.
\newblock The nonlinear statistics of high-contrast patches in natural images.
\newblock \emph{International Journal of Computer Vision}, 54\penalty0
  (1--3):\penalty0 83--103, 2003.

\bibitem[Lee \& Verleysen(2009)Lee and Verleysen]{Lee09}
Lee, J.~A. and Verleysen, M.
\newblock Quality assessment of dimensionality reduction: {R}ank-based
  criteria.
\newblock \emph{Neurocomputing}, 72\penalty0 (7):\penalty0 1431--1443, 2009.

\bibitem[Lewis \& Morozov(2015)Lewis and Morozov]{Lewis15}
Lewis, R. and Morozov, D.
\newblock Parallel computation of persistent homology using the blowup complex.
\newblock In \emph{Proceedings of the 27th ACM Symposium on Parallelism in
  Algorithms and Architectures~(SPAA)}, pp.\  323--331. ACM, 2015.

\bibitem[McInnes et~al.(2018)McInnes, Healy, and Melville]{mcinnes2018umap}
McInnes, L., Healy, J., and Melville, J.
\newblock {UMAP}: Uniform manifold approximation and projection for dimension
  reduction.
\newblock \emph{arXiv e-prints}, art. arXiv:1802.03426, 2018.

\bibitem[M{\'e}moli \& Sapiro(2004)M{\'e}moli and Sapiro]{Memoli04}
M{\'e}moli, F. and Sapiro, G.
\newblock Comparing point clouds.
\newblock In \emph{Proceedings of the Eurographics/ACM SIGGRAPH Symposium on
  Geometry Processing~(SGP)}, pp.\  32--40, New York, NY, USA, 2004.
  Association for Computing Machinery.

\bibitem[Paul \& Chalup(2017)Paul and Chalup]{Paul17}
Paul, R. and Chalup, S.~K.
\newblock A study on validating non-linear dimensionality reduction using
  persistent homology.
\newblock \emph{Pattern Recognition Letters}, 100:\penalty0 160--166, 2017.

\bibitem[Peyr{\'e}(2009)]{Peyre09}
Peyr{\'e}, G.
\newblock Manifold models for signals and images.
\newblock \emph{Computer Vision and Image Understanding}, 113\penalty0
  (2):\penalty0 249--260, 2009.

\bibitem[Poulenard et~al.(2018)Poulenard, Skraba, and Ovsjanikov]{Poulenard18}
Poulenard, A., Skraba, P., and Ovsjanikov, M.
\newblock Topological function optimization for continuous shape matching.
\newblock \emph{Computer Graphics Forum}, 37\penalty0 (5):\penalty0 13--25,
  2018.

\bibitem[Ramamurthy et~al.(2019)Ramamurthy, Varshney, and Mody]{Ramamurthy19}
Ramamurthy, K.~N., Varshney, K., and Mody, K.
\newblock Topological data analysis of decision boundaries with application to
  model selection.
\newblock In Chaudhuri, K. and Salakhutdinov, R. (eds.), \emph{Proceedings of
  the 36th International Conference on Machine Learning}, volume~97 of
  \emph{Proceedings of Machine Learning Research}, pp.\  5351--5360. PMLR,
  2019.

\bibitem[Reininghaus et~al.(2015)Reininghaus, Huber, Bauer, and
  Kwitt]{Reininghaus15}
Reininghaus, J., Huber, S., Bauer, U., and Kwitt, R.
\newblock A stable multi-scale kernel for topological machine learning.
\newblock In \emph{Proceedings of the IEEE Conference on Computer Vision and
  Pattern Recognition}, pp.\  4741--4748, 2015.

\bibitem[Rieck \& Leitte(2015)Rieck and Leitte]{Rieck15b}
Rieck, B. and Leitte, H.
\newblock Persistent homology for the evaluation of dimensionality reduction
  schemes.
\newblock \emph{Computer Graphics Forum}, 34\penalty0 (3):\penalty0 431--440,
  2015.

\bibitem[Rieck \& Leitte(2017)Rieck and Leitte]{Rieck15a}
Rieck, B. and Leitte, H.
\newblock Agreement analysis of quality measures for dimensionality reduction.
\newblock In Carr, H., Garth, C., and Weinkauf, T. (eds.), \emph{Topological
  Methods in Data Analysis and Visualization IV}. Springer, Cham, Switzerland,
  2017.

\bibitem[Rieck et~al.(2019{\natexlab{a}})Rieck, Bock, and Borgwardt]{Rieck19b}
Rieck, B., Bock, C., and Borgwardt, K.
\newblock A persistent {W}eisfeiler--{L}ehman procedure for graph
  classification.
\newblock In Chaudhuri, K. and Salakhutdinov, R. (eds.), \emph{Proceedings of
  the 36th International Conference on Machine Learning}, volume~97 of
  \emph{Proceedings of Machine Learning Research}, pp.\  5448--5458. PMLR,
  2019{\natexlab{a}}.

\bibitem[Rieck et~al.(2019{\natexlab{b}})Rieck, Togninalli, Bock, Moor, Horn,
  Gumbsch, and Borgwardt]{Rieck19a}
Rieck, B., Togninalli, M., Bock, C., Moor, M., Horn, M., Gumbsch, T., and
  Borgwardt, K.
\newblock Neural persistence: a complexity measure for deep neural networks
  using algebraic topology.
\newblock In \emph{International Conference on Learning
  Representations~(ICLR)}, 2019{\natexlab{b}}.

\bibitem[scikit-optimize contributers(2018)]{scikit-optimize}
scikit-optimize contributers, T.
\newblock \texttt{scikit-optimize}/\texttt{scikit-optimize}: v0.5.2, March
  2018.

\bibitem[Sheehy(2013)]{Sheehy13}
Sheehy, D.~R.
\newblock Linear-size approximations to the {V}ietoris--{R}ips filtration.
\newblock \emph{Discrete {\&} Computational Geometry}, 49\penalty0
  (4):\penalty0 778--796, 2013.

\bibitem[Tenenbaum et~al.(2000)Tenenbaum, {De Silva}, and
  Langford]{tenenbaum2000global}
Tenenbaum, J.~B., {De Silva}, V., and Langford, J.~C.
\newblock A global geometric framework for nonlinear dimensionality reduction.
\newblock \emph{Science}, 290\penalty0 (5500):\penalty0 2319--2323, 2000.

\bibitem[van~der Maaten \& Hinton(2008)van~der Maaten and
  Hinton]{maaten2008visualizing}
van~der Maaten, L.~J. and Hinton, G.
\newblock Visualizing data using {t}-{SNE}.
\newblock \emph{Journal of Machine Learning Research}, 9:\penalty0 2579--2605,
  2008.

\bibitem[van~der Maaten et~al.(2009)van~der Maaten, Postma, and van~den
  Herik]{Maaten09}
van~der Maaten, L.~J., Postma, E.~O., and van~den Herik, H.~J.
\newblock Dimensionality reduction: {A} comparative review.
\newblock Technical Report 2009-005, Tilburg University, 2009.

\bibitem[Venna \& Kaski(2006)Venna and Kaski]{Venna06}
Venna, J. and Kaski, S.
\newblock Visualizing gene interaction graphs with local multidimensional
  scaling.
\newblock In \emph{Proceedings of the 14th European Symposium on Artificial
  Neural Networks}, pp.\  557--562. d-side publishing, 2006.

\bibitem[Vietoris(1927)]{Vietoris27}
Vietoris, L.
\newblock \"{U}ber den h{\"o}heren {Z}usammenhang kompakter {R}{\"a}u\-me und
  eine {K}lasse von zusammenhangstreuen {A}bbildungen.
\newblock \emph{Mathematische Annalen}, 97\penalty0 (1):\penalty0 454--472,
  1927.

\bibitem[Wagner \& D{\l}otko(2014)Wagner and D{\l}otko]{Wagner14}
Wagner, H. and D{\l}otko, P.
\newblock Towards topological analysis of high-dimensional feature spaces.
\newblock \emph{Computer Vision and Image Understanding}, 121:\penalty0 21--26,
  2014.

\bibitem[Yan et~al.(2018)Yan, Zhao, Rosen, Scheidegger, and Wang]{Yan18}
Yan, L., Zhao, Y., Rosen, P., Scheidegger, C., and Wang, B.
\newblock Homology-preserving dimensionality reduction via manifold landmarking
  and tearing.
\newblock \emph{arXiv e-prints}, art. arXiv:1806.08460, 2018.

\bibitem[Zomorodian(2010)]{Zomorodian10a}
Zomorodian, A.~J.
\newblock Fast construction of the {V}ietoris--{R}ips complex.
\newblock \emph{Computers {\&} Graphics}, 34\penalty0 (3):\penalty0 263--271,
  2010.

\end{thebibliography}
\bibliographystyle{icml2020}


\clearpage

\counterwithin{figure}{section}
\counterwithin{table} {section}

\appendix
\section{Appendix}\label{sec:Appendix}

\subsection{Persistent Homology Calculation Details}\label{sec:Persistent homology details}

This section provides more details about the persistent homology
calculation; it is more geared towards an expert reader and aims
for a concise description of all required concepts.

\paragraph{Simplicial homology}
To understand persistent homology, we first have to understand
simplicial homology.
Given a simplicial complex $\simplicialcomplex$, i.e.\
a high-dimensional generalisation of a graph, let
$\chaingroup{d}(\simplicialcomplex)$ denote the vector space generated
over $\mathds{Z}_2$ whose elements are the $d$-simplices in
$\simplicialcomplex$\footnote{%
  It is also possible to describe this calculation with coefficients in
  other fields, but the case of $\mathds{Z}_2$ is advantageous because
  it simplifies the implementation of all operations.
}.
For $\sigma = (v_0,\dots,v_d) \in \simplicialcomplex$, let
$\boundary{d}\colon\chaingroup{d}(\simplicialcomplex)\to\chaingroup{d-1}(\simplicialcomplex)$
be the boundary homomorphism defined by
\begin{equation}
  \boundary{d}(\sigma) := \sum_{i=0}^{d}(v_0,\dots, v_{i-1},v_{i+1},\dots, v_d)
\end{equation}
for a single simplex and linearly extended to $\chaingroup{d}(\simplicialcomplex)$.
The $d$\th homology group $\homologygroup{d}(\simplicialcomplex)$ of $\simplicialcomplex$ is defined as
the quotient group $\homologygroup{d}(\simplicialcomplex) := \ker\boundary{d} / \im\boundary{d+1}$.
The rank of the $d$\th homology group is known as the $d$\th Betti number~$\betti{d}$,
i.e.\ $\betti{d}(\simplicialcomplex) := \rank\homologygroup{d}(\simplicialcomplex)$.
The sequence of Betti numbers $\betti{0},\dots,\betti{d}$ of
a $d$-dimensional simplicial complex is commonly used to distinguish
between different manifolds.
For example, a \mbox{$2$-sphere} in $\real^3$ has Betti numbers
$(1,0,1)$, while a \mbox{$2$-torus} in $\real^3$ has Betti numbers $(1,2,1)$.
Betti numbers are of limited use for analysing real-world data sets,
however, because their representation is too coarse and easily affected
by small changes in the underlying simplicial complex. In an idealised,
platonic setting, this does not pose a problem, because one assumes that
the triangulation of a manifold is known a priori; for real-world data
sets, however, we are typically dealing with point clouds and have
\emph{no} knowledge of the underlying manifold, making the calculation
of the ``proper'' simplicial complex nigh impossible. These disadvantages
prompted the development of persistent homology.

\paragraph{Persistent homology}
Let
$\emptyset = \simplicialcomplex_0 \subseteq \simplicialcomplex_1 \subseteq \dots \subseteq \simplicialcomplex_{m-1} \subseteq \simplicialcomplex_m = \simplicialcomplex$
be a nested sequence of simplicial complexes, called
\emph{filtration}. Filtrations can be defined based on different
functions; the Vietoris--Rips filtration that we discuss in the paper,
for example, is defined by a distance function, such as the Euclidean
distance between points of a point cloud.
Notice that we may still calculate the simplicial homology of each
$\simplicialcomplex_i$ in the filtration. The filtration provides more
information, though: the family of boundary
operators~$\boundary{}(\cdot)$, together with the inclusion
homomorphism, induces a homomorphism between corresponding homology
groups of the filtration, i.e.\ $f_d^{i,j} \colon
\homologygroup{d}(\simplicialcomplex_i) \to
\homologygroup{d}(\simplicialcomplex_j)$. This homomorphism yields
a sequence of homology groups
\begin{equation*}
  \begin{split}
    &0 = \homologygroup{d}(\simplicialcomplex_0) \xrightarrow{f_d^{0,1}} \homologygroup{d}(\simplicialcomplex_1) \xrightarrow{f_d^{1,2}} \dots\\
    & \dots \xrightarrow{f_d^{m-2,m-1}} \homologygroup{d}(\simplicialcomplex_{m-1}) \xrightarrow{f_d^{m-1,m}} \homologygroup{d}(\simplicialcomplex_m) = \homologygroup{d}(\simplicialcomplex)
\end{split}
\end{equation*}
for every dimension $d$. Given indices $i \leq j$, the $d$\th
\emph{persistent homology group} is defined as
\begin{equation}
  \persistenthomologygroup{d}{i,j} := \ker\boundary{d}(\simplicialcomplex_i) / \left( \im\boundary{d+1}(\simplicialcomplex_j)\cap\ker\boundary{d}(\simplicialcomplex_i)\right).
\end{equation}
It can be seen as the homology group that contains all homology classes
created in~$\simplicialcomplex_i$ that are still
\emph{present}~(``active'') in~$\simplicialcomplex_j$.
We define the $d$\th persistent Betti number to be the rank of this
group, i.e.\ $\persistentbetti{d}{i,j} := \rank
\persistenthomologygroup{d}{i,j}$, which generalises the previous
definition for simplicial homology.
Persistent homology results in a \emph{sequence} of Betti
numbers---instead of a single number---that permits a fine-grained
description of topological activity. This activity is typically
summarised in a persistence diagram, thus replacing the indices~$i,j$
with real numbers based on the function that was used to calculate the
filtration.

\paragraph{Persistence diagrams}
%
A filtration often has associated values~(or weights)
$w_0 \leq w_1 \leq \dots \leq w_{m-1} \leq w_m$, such as the pairwise
distances in a point cloud.
These values permit the calculation of topological feature descriptors
known as \emph{persistence diagrams}:
for each dimension~$d$ and each pair $i \leq j$, one stores a pair
$(a,b) := (w_i, w_j) \in \real^2$
with multiplicity
\begin{equation}
  \mu_{i,j}^{(d)} := \left( \persistentbetti{d}{i,j-1} - \persistentbetti{d}{i,j} \right) - \left( \persistentbetti{d}{i-1,j-1} - \persistentbetti{d}{i-1,j} \right)
\end{equation}
in a multiset~(typically, $\mu_{i,j}^{(d)} = 0$ for many pairs).
The pair $(a,b)$ represents a topological feature that was created at
a certain threshold~$a$, and destroyed at another threshold~$b$. In
the case of the Vietoris--Rips filtration and connected components, we
have~$a = 0$  because \emph{all} connected components are present at the beginning of the
filtration by definition. Similarly, $b$ will correspond to an edge used
in the minimum spanning tree of the data set.
In general, the resulting set of points is called the $d$\th
\emph{persistence diagram}~$\diagram_d$.
Given a point $(a,b) \in \diagram_d$, the quantity $\persistence(x,y) :=
|a-b|$ is referred to as its \emph{persistence}.

\subsection{Proof of Theorem~\ref{thm:Probability bottleneck distance}}\label{sec:Proof probability}

\addtocounter{theorem}{-2}

\begin{theorem}
  Let $\pointcloud$ be a point cloud of cardinality $n$ and $\pointcloud^{(m)}$ be one subsample of
  $\pointcloud$ of cardinality $m$, i.e.\ $\pointcloud^{(m)} \subseteq \pointcloud$, sampled without
  replacement. We can bound the probability of $\pointcloud^{(m)}$ exceeding a threshold in terms of
  the bottleneck distance as
  \begin{small}
    \begin{equation}
      \prob\mleft(\db\!\mleft(\diagram^{\pointcloud}\!\!, \diagram^{\pointcloud^{(m)}}\mright)\!>\!\epsilon \mright)\leq \prob\mleft( \dhd\!\mleft(\pointcloud, \pointcloud^{(m)}\mright)\!>\!2\epsilon \mright),
      \label{eqa:Probability bottleneck distance}
    \end{equation}
  \end{small}
  where $\dhd$ refers to the Hausdorff distance between the point cloud and its subsample, i.e.\
  \begin{equation}
    \begin{split}
      \dhd\mleft(X, Y\mright) := \max\{&\sup_{x \in X} \inf_{y \in Y} \dist\mleft(x, y\mright),\\
                                             &\sup_{y \in Y} \inf_{x \in X} \dist\mleft(x, y\mright)\}
    \end{split}
    \label{eqa:Persistence stability}
  \end{equation}
  for a baseline distance $\dist\mleft(x, y\mright)$ such as the Euclidean distance.
\end{theorem}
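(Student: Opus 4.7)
The plan is to deduce the probabilistic statement from a purely deterministic stability bound, with monotonicity of probability doing the final work; the subsampling only enters through the fact that it makes both sides of the inequality random variables on the same probability space.

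First, I would invoke the persistence stability theorem for Vietoris--Rips filtrations (in the tradition of \citet{Cohen-Steiner07}, extended to metric-space inputs by Chazal--de Silva--Oudot): for any two finite metric spaces $\pointcloud$ and $\pointcloud'$, one has a deterministic inequality of the form
\begin{equation*}
  \db\mleft(\diagram^{\pointcloud}, \diagram^{\pointcloud'}\mright)
  \;\leq\; C \cdot \dgh\mleft(\pointcloud, \pointcloud'\mright),
\end{equation*}
where $C$ is a constant fixed by the chosen normalisation of the Vietoris--Rips filtration (diameter vs.\ radius) and the convention for $\dgh$.

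Second, I would specialise this to $\pointcloud' = \pointcloud^{(m)} \subseteq \pointcloud$. Because the subsample is embedded in the same ambient metric space as the full point cloud, the inclusion map furnishes a canonical correspondence between the two sets, which directly witnesses $\dgh\mleft(\pointcloud, \pointcloud^{(m)}\mright) \leq \dhd\mleft(\pointcloud, \pointcloud^{(m)}\mright)$ with the Hausdorff distance defined as in Eq.~(\ref{eqa:Persistence stability}). Chaining the two estimates yields a deterministic, pointwise~(in the sample space) bound of the shape $\db \leq C' \cdot \dhd$ relating the two random quantities in the statement.

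Third, I would translate this deterministic bound into the desired probabilistic one by elementary set-theoretic reasoning: the event $\mleft\{\db > \epsilon\mright\}$ is contained in the event $\mleft\{\dhd > \epsilon / C'\mright\}$, and monotonicity of $\prob$ then gives the stated inequality in Eq.~(\ref{eqa:Probability bottleneck distance}) once $C'$ matches the factor $2$ appearing on the right-hand side.

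The main obstacle is book-keeping rather than conceptual: the raw Vietoris--Rips stability bound is usually quoted with a factor of $2$ coming from the diameter parametrisation, and one must carefully reconcile that constant with the paper's precise choice of $\dhd$ and filtration scaling in order to obtain the exact multiplicative $2$ on the Hausdorff side of Eq.~(\ref{eqa:Probability bottleneck distance}). Once that normalisation check is done, the subsampling structure (sampling without replacement, cardinality $m$, etc.) plays no role beyond making $\dhd\mleft(\pointcloud, \pointcloud^{(m)}\mright)$ a well-defined random variable, and no measure-theoretic or combinatorial argument about the distribution of subsamples is required.
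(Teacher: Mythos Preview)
Your proposal is exactly the paper's argument: cite the Chazal et al.\ Vietoris--Rips stability bound $\db \leq 2\,\dgh$, observe that $\dgh\mleft(\pointcloud,\pointcloud^{(m)}\mright) \leq \dhd\mleft(\pointcloud,\pointcloud^{(m)}\mright)$ because the subsample carries the restricted ambient metric, chain to $\db \leq 2\,\dhd$, and then pass to the probabilistic statement by monotonicity of $\prob$. Your caveat about the book-keeping of the factor $2$ is apt---the paper's proof treats that constant no more carefully than you do.
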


\begin{proof}
  The stability of persistent homology calculations was proved by \citet{Chazal14} for finite metric
  spaces. More precisely, given two metric spaces $X$ and $Y$, we have
  \begin{equation}
    \db\mleft(\diagram^{X}, \diagram^{Y}\mright) \leq  2\dgh\mleft(X, Y\mright),
  \end{equation}
  where $\dgh\mleft(X, Y\mright)$ refers to the Gromov--Hausdorff distance~\citep[p.\ 254]{Burago01}
  of the two spaces. It is defined as the infimum Hausdorff distance over all isometric embeddings
  of $X$ and $Y$. This distance can be employed for shape comparison~\citep{Chazal09, Memoli04}, but
  is hard to compute. In our case, with $X = \pointcloud$ and $Y = \pointcloud^{(m)}$, we consider
  both spaces to have the same metric~(for $Y$, we take the canonical restriction of the metric from
  $X$ to the subspace $Y$). By definition of the Gromov--Hausdorff distance, we thus have
  $\dgh\mleft(X, Y\mright) \leq \dhd\mleft(X, Y\mright)$, so
  Eq.~\ref{eqa:Persistence stability} leads to
  \begin{equation}
    \db\mleft(\diagram^X, \diagram^Y\mright) \leq 2\dhd\mleft(X, Y\mright),
  \end{equation}
  from which the original claim from Eq.~\ref{eqa:Probability bottleneck distance} follows by taking
  probabilities on both sides.
\end{proof}

\subsection{Empirical Convergence Rates of \texorpdfstring{$\dhd\mleft(\pointcloud, \pointcloud^{(m)}\mright)$}{the Hausdorff Distance}}\label{sec:Empirical convergence rates}

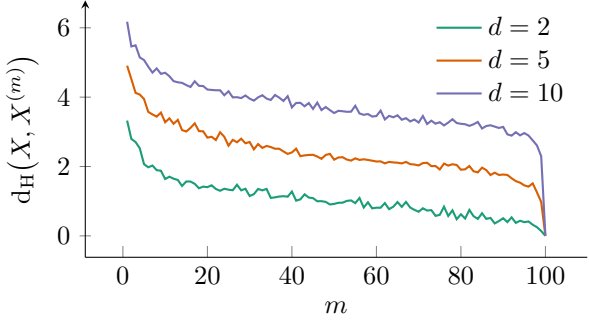
\begin{figure}
  \centering
  \begin{tikzpicture}
    \begin{axis}[%
      axis x line*      = bottom,
      axis y line       = left,
      enlargelimits     = 0.10,
      width             = \linewidth,
      height            = 5cm,
      ylabel            = {$\dhd\mleft(\pointcloud, \pointcloud^{(m)}\mright)$},
      xlabel            = {$m$},
      legend cell align = {left},
      legend style      ={
        draw = none
      },
    ]
      \legend{$d = 2$, $d = 5$, $d = 10$}
      \addplot[thick, Dark2-A] table {data/Hausdorff_subsampling_d02.txt};
      \addplot[thick, Dark2-B] table {data/Hausdorff_subsampling_d05.txt};
      \addplot[thick, Dark2-C] table {data/Hausdorff_subsampling_d10.txt};
    \end{axis}
  \end{tikzpicture}
  \caption{%
    Empirical convergence rate~(mean) of the Hausdorff distance for a subsample of size $m$ of
    $100$ points in a $d$-dimensional space, following a standard normal distribution.
  }
  \label{fig:Hausdorff}
\end{figure}

Figure~\ref{fig:Hausdorff} depicts the mean of the convergence rate~(mean) of the Hausdorff distance
for a subsample of size $m$ of $100$ points in a $d$-dimensional space, following a standard
normal distribution. We can see that the convergence rate is roughly similar, but shown on different
absolute levels that depend on the ambient dimension.
While bounding the convergence rate of
this expression is feasible~\citep{Chazal15a, Chazal15b}, it requires
more involved assumptions on the measures from which $\pointcloud$ and
$\pointcloud^{(m)}$ are sampled.
Additionally, we can give a simple bound using the
\emph{diameter}~$\diameter\mleft(\pointcloud\mright) := \sup\mleft\{\dist\mleft(x, y\mright) \mid x,
y \in \pointcloud \mright\}$. We have $\dhd\mleft(\pointcloud, \pointcloud^{(m)}\mright)
\leq \diameter\mleft(\pointcloud\mright)$ because the supremum is guaranteed to be an upper bound
for the Hausdorff distance. This worst-case bound does not account for
the sample size~(or mini-batch size) $m$, though~(see Theorem~\ref{thm:ExpectedHausdorff} for
an expression that takes $m$ into account).

\subsection{Proof of Theorem~\ref{thm:ExpectedHausdorff}}\label{sec:Proof expectation}

Prior to the proof we state two observations that arise from our special setting of dealing with
finite point clouds.

\begin{observation}
  \label{Ass: hd simple}
  Since $\pointcloud^{(m)} \subseteq \pointcloud$, we have $\sup_{x' \in \pointcloud^{(m)}} \inf_{x
  \in \pointcloud} \dist\mleft(x, x' \mright) = 0$. Hence, the Hausdorff distance simplifies to:
  \begin{equation}
    \label{Eq:hd simplified}
    \dhd\mleft(\pointcloud, \pointcloud^{(m)} \mright) := \sup_{x \in \pointcloud} \inf_{x' \in \pointcloud^{m}} \dist\mleft(x, x' \mright)
  \end{equation}
  In other words, we only have to consider a ``one-sided'' expression of the distance because the
  distance from the subsample to the original point cloud is always zero.
\end{observation}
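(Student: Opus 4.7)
The plan is to unpack the definition of the Hausdorff distance from Equation~\ref{eqa:Persistence stability} and show that, under the subsampling hypothesis $\pointcloud^{(m)} \subseteq \pointcloud$, one of the two symmetric terms vanishes identically, so the $\max$ reduces to the remaining term.

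The key step is the following elementary observation: for any $x' \in \pointcloud^{(m)}$, the subset inclusion $\pointcloud^{(m)} \subseteq \pointcloud$ implies $x' \in \pointcloud$ as well. Therefore, when evaluating $\inf_{x \in \pointcloud} \dist(x, x')$, we may take the witness $x := x'$, yielding $\dist(x', x') = 0$. Since any baseline distance (e.g.\ the Euclidean distance) is nonnegative, the infimum is achieved at $0$. Taking the supremum of zeros over $x' \in \pointcloud^{(m)}$ then gives $\sup_{x' \in \pointcloud^{(m)}} \inf_{x \in \pointcloud} \dist(x, x') = 0$.

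Substituting this into the definition $\dhd(\pointcloud, \pointcloud^{(m)}) = \max\{\sup_{x \in \pointcloud} \inf_{x' \in \pointcloud^{(m)}} \dist(x, x'),\; \sup_{x' \in \pointcloud^{(m)}} \inf_{x \in \pointcloud} \dist(x, x')\}$ and using that the first term is automatically nonnegative, the $\max$ collapses to the first term, which is precisely Equation~\ref{Eq:hd simplified}. There is no real obstacle here: the statement is a direct consequence of the asymmetric containment of the two point sets, and the only thing worth emphasising is that the argument uses nothing beyond nonnegativity of $\dist$ and the containment hypothesis, so it applies equally well to any (pseudo)metric or even to the non-metric similarity measures alluded to earlier in the paper.
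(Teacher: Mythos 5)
Your argument is correct and is exactly the same reasoning the paper uses: containment $\pointcloud^{(m)} \subseteq \pointcloud$ makes the infimum over $\pointcloud$ attainable at $x' = x'$ itself, killing the second term of the $\max$ by nonnegativity. The paper states this as an observation without further elaboration, and your proposal simply spells out that one-line justification.
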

\begin{observation}
  \label{Ass: finite sets}
  Since our point clouds of interest are finite sets, the suprema and infima of the Hausdorff
  distance coincide with the maxima and minima, which we will subsequently use for easier
  readability.
\end{observation}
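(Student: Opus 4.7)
The plan is to reduce the claim to the elementary fact that every nonempty finite subset of $\real$ attains its supremum and infimum. The Hausdorff distance, as written in Eq.~\ref{eqa:Persistence stability}, involves only expressions of the form $\sup_{x \in A} \inf_{y \in B} \dist(x, y)$ where $A$ and $B$ are each either $\pointcloud$ or $\pointcloud^{(m)}$; since both point clouds are finite by hypothesis, it suffices to handle the inner and outer optimisations in turn and then assemble the max over the outer pair.

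First I would fix an arbitrary $x \in \pointcloud$ and consider the set $S_x := \{\dist(x, y) : y \in \pointcloud^{(m)}\} \subseteq \real_{\geq 0}$. Since $|\pointcloud^{(m)}| = m < \infty$, the set $S_x$ is nonempty and finite, so its infimum is attained by some element of $S_x$; hence $\inf_{y \in \pointcloud^{(m)}} \dist(x, y) = \min_{y \in \pointcloud^{(m)}} \dist(x, y)$. Next I would apply the same principle to the outer optimisation: the image set $T := \{\min_{y \in \pointcloud^{(m)}} \dist(x, y) : x \in \pointcloud\}$ is indexed by the finite set $\pointcloud$ and is therefore itself a finite subset of $\real_{\geq 0}$, so $\sup T = \max T$. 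Combining both substitutions yields $\sup_{x \in \pointcloud} \inf_{y \in \pointcloud^{(m)}} \dist(x, y) = \max_{x \in \pointcloud} \min_{y \in \pointcloud^{(m)}} \dist(x, y)$. The symmetric term (which Observation~\ref{Ass: hd simple} further collapses to $0$, but this need not be invoked here) is handled by the identical argument with the roles of $\pointcloud$ and $\pointcloud^{(m)}$ swapped, and the outermost $\max$ of the two one-sided expressions is trivially a max over a two-element set.

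The main obstacle is essentially non-existent: the argument is a one-line invocation of the well-ordering of finite subsets of $\real$, and the only minor care required is to apply attainment \emph{separately} at the inner infimum and the outer supremum so that the substitution propagates cleanly through the nested expression. No metric-specific or topological property of $\dist$ beyond taking values in $\real$ is used, which is why the simplification works verbatim for any of the baseline distances discussed in Section~\ref{sec:Vietoris--Rips complex}, including non-Euclidean similarity measures.
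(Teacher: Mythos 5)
Your proof is correct and matches the paper's (implicit) justification: the paper states this as an unproved observation, relying on exactly the elementary fact you spell out, namely that a nonempty finite subset of $\real$ attains its supremum and infimum, applied first to the inner infimum and then to the outer supremum. No substantive difference in approach.
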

Hence, the computation of $\dhd(\pointcloud, \pointcloud^{(m})$ can be divided into three steps.
\begin{enumerate}
  \item Using the baseline distance $\dist\mleft(\cdot, \cdot \mright)$, we compute a distance
    matrix $\mathbf{A} \in \real^{n \times m} $ between all points in $\pointcloud$ and
    $\pointcloud^{(m)}$.
    \item For each of the $n$ points in $\pointcloud$, we compute the minimal distance to the $m$
    samples of $\pointcloud^{(m)}$ by extracting the minimal distance per row of $\mathbf{A}$ and
    gather all minimal distances in $\bm{\delta} \in \real^n$.
    \item Finally, we return the maximal entry of $\bm{\delta}$ as $\dhd\mleft(\pointcloud,
      \pointcloud^{(m)}\mright)$.
\end{enumerate}

In the subsequent proof, we require an independence assumption of the samples.

\begin{proof}
  Using Observations~\ref{Ass: hd simple} and~\ref{Ass: finite sets} we obtain a simplified
  expression for the Hausdorff distance, i.e.\
  \begin{equation}
    \dhd\mleft(\pointcloud, \pointcloud^{(m)}\mright) := \max\limits_{i, 1 \leq i \leq
    n}\mleft(\min\limits_{j,  1 \leq j \leq m}\mleft( a_{ij} \mright)\mright).
  \end{equation}
  The minimal distances of the first $m$ rows of $\mathbf{A}$ are trivially $0$. Hence, the outer
  maximum is determined by the remaining $n-m$ row minima \{$\delta_i \mid m < i \leq n$ \} with
  $\delta_i = \min\limits_{1 \leq j \leq m}a_{ij}$. Those minima follow the
  distribution $F_\Delta(y)$ with
  \begin{align}
    F_\Delta(y) &= \prob(\delta_i \leq y) = 1 - \prob(\delta_i > y) \\
                &= 1 - \prob\mleft( \min_{1 \leq j \leq m} a_{ij} > y \mright)\\
                &= 1 - \prob\mleft( \bigcap_{j} a_{ij} > y \mright) \\ &= 1 - \mleft(1 - F_D\mleft(y\mright)\mright)^{m} \\
                &= - \sum_{k=1}^m {m \choose k}\mleft( - F_D\mleft(y\mright)\mright)^{m-k}.
  \end{align}
  Next, we consider $Z := \max_{1 \leq i \leq n}\delta_{i}$. To evaluate the density of $Z$, we
  first need to derive its distribution $F_Z$:
  \begin{align}
      F_Z(z) &= \prob(Z \leq z) = \prob \mleft( \max_{m < i \leq n} \delta_i \leq z \mright) \\ &= \prob\mleft( \bigcap_{m < i \leq n} \delta_{i} \leq z \mright)
  \end{align}
  Next, we approximate $Z$ by $Z'$ by imposing \emph{i.i.d\ sampling} of the minimal distances
  $\delta_i$ from $F_\Delta$. This is an approximation because in practice, the rows $m+1$ to $n$
  are not stochastically independent because of the triangular inequality that holds for metrics.
  However, assuming i.i.d., we arrive at
  \begin{equation}
    F_{Z'}(z) = F_{\Delta}(z)^{n-m}.
  \end{equation}
  Since $Z'$ has positive support its expectation can then be evaluated as:
  \begin{align}
      \label{Eq:upperbound1}
    \EE_{Z' \sim F_{Z'}}\mleft[ Z' \mright] &= \int\displaylimits_{0}^{+\infty} \mleft( 1 - F_{Z'}(z) \mright)
      \operatorname{d}\!z \\
      &= \int\displaylimits_{0}^{+\infty} \mleft( 1 - F_{\Delta}(z)^{n-m} \mright)
      \operatorname{d}\!z
\end{align}

  The independence assumption leading to $Z'$ results in \emph{overestimating} the variance of the drawn
  minima $\delta_i$. Thus, the expected maximum of those minima, $\EE\mleft[ Z' \mright]$, is
  overestimating the actual expectation of the maximum $\EE\mleft[ Z \mright]$, which is why
  Eq.~\ref{Eq:upperbound1} constitutes an \emph{upper bound} of $\EE\mleft[
    Z \mright]$, and equivalently, an upper bound of $\EE\mleft[\dhd(\pointcloud,
  \pointcloud^{(m)})\mright]$.
  %
\end{proof}

\subsection{Synthetic Data Set}\label{Supp:Datasets}

\data{Spheres} consists of eleven high-dimensional \mbox{$100$-spheres} living in $101-$dimensional
space. Ten spheres of radius $r=5$ are each shifted in a random direction~(by adding the same
Gaussian noise vector per sphere). To this end, we draw ten $d$-dimensional Gaussian vectors
following $\mathcal{N}\mleft(\mathbf{0},\mathbf{I}\mleft(\nicefrac{10}{\sqrt{d}}\mright)\mright)$ for $d=101$.
Crucially, to add interesting topological information to the data set, the ten spheres are enclosed
by an additional larger sphere of radius $5r$. The spheres were generated using the library
\texttt{scikit-tda}.

\subsection{Architectures and Hyperparameter Tuning}\label{Supp:Arch and Hyper}

\paragraph{Architectures for synthetic data set}
For the synthetically generated data set, we use a simple multilayer perceptron
architecture consisting of two hidden layer with $32$ neurons each both encoder and decoder and a
bottleneck of two neurons such that the sequence of hidden-layer neurons is $32-32-2-32-32$. ReLU
non-linearities and batch normalization were applied between the layers excluding the output layer
and the bottleneck layer. The networks were fit using mean squared error loss.

\paragraph{Architectures for real world data sets}
For the \data{MNIST}, \data{Fashion-MNIST}, and \data{CIFAR-10} data sets, we use an architecture inspired by DeepAE~\citep{hinton2006reducing}. This architecture is composed of 3 layers of hidden neurons
of decreasing size ($1000-500-250$) for the encoder part, a bottleneck of two neurons, and a
sequence of three layers of hidden neurons in decreasing size ($250-500-1000$) for the decoder.
In contrast to the originally proposed architecture, we applied ReLU non-linearities and batch
normalization between the layers as we observed faster and more stable training. For the
non-linearities of the final layer, we applied the $tanh$ non-linearity, such that the image of
the activation matches the range of input images scaled between $-1$ and $1$. Also here, we
applied mean squared error loss.

All neural network architectures were fit using Adam and weight-decay of $10^{-5}$.

\paragraph{Hyperparameter tuning}
For hyperparameter tuning we apply random sampling of hyperparameters using the \texttt{scikit-optimize}
library~\citep{scikit-optimize} with 20 calls per method on all data sets. We select the best model parameters in terms of $\dkl_{0.1}$ on the validation split and evaluate and report it on the test split. To estimate performance means and standard deviations, we repeated the evaluation on an independent test split 5 times by using the best parameters (as identified in the hyperparameter search on the validation split) and refitting the models by resampling the train-validation split.

\paragraph{Neural networks}
For the neural networks, we sample the learning rate log-uniformly in
the range $[10^{-4}, 10^{-2}]$, the batch size uniformly between $[16,
128]$, and for our topological autoencoder method~(TopoAE), we sample
the regularisation strength~$\lambda$ log-uniformly in the range
$[10^{-1}, 3]$. Each model was allowed to train for at most 100 epochs
and we applied early stopping with patience $=10$ based on the
validation loss.

\paragraph{Competitor methods}
For \mbox{t-SNE}, we sample the perplexity uniformly in the range $5 - 50$ and the learning rate log-uniformly
in the range $10 - 1000$. For Isomap and UMAP, the number of neighbors included in the computation was varied between
$15 - 500$. For UMAP, we additionally vary the min\_dist parameter uniformly between $0$ and $1$.

\subsection{Measuring the Quality of Latent Representations}\label{sec:Quality measures}
%
Next to the reconstruction error~(if available; please see the paper for
a discussion on this), we use a variety of NLDR metrics to assess the
quality of our method. Our primary interest concerns the quality of the
latent space because, among others, it can be used to visualise the data
set.
We initially considered classical quality metrics from non-linear dimensionality
reduction~(NLDR) algorithms~(see \citet{Bibal19, Gracia14, Maaten09} for
more in-depth descriptions), namely
\begin{compactenum}[(1)]
  \item the \emph{root mean square error}~($\ell$-RMSE) between the distance
  matrix of the original space and the latent space~(as mentioned in the
  main text, this is not related to the reconstruction error),
  \item the \emph{mean relative rank error}~($\ell$-MRRE), which measures
  the changes in \emph{ranks} of distances in the original space and the
  latent space~\citep{Lee09},
  \item the \emph{trustworthiness}~($\ell$-Trust) measure~\citep{Venna06}, which checks to
  what extent the $k$ nearest neighbours of a point are preserved when
  going from the original space to the latent space, and
\item the \emph{continuity}~($\ell$-Cont) measure~\citep{Venna06},
  which is defined analogously to $\ell$-Trust, but checks to what
  extent neighbours are preserved when going from the \emph{latent}
  space to the original space.
\end{compactenum}
All of these measures are defined based on comparisons of the original
space and the latent space; the reconstructed space is \emph{not} used
here.
As an additional measure, we calculate the \emph{Kullback--Leibler
divergence} between density distributions of the input space and the
latent space.
Specifically, for a point cloud $\pointcloud$ with an associated
distance $\dist$, we first use the \emph{distance to a measure}
density estimator~\citep{Chazal11, Chazal14a}, defined as
$\density^{\!\!\mathcal{\pointcloud}}\mleft(x\mright) := \sum_{y \in \mathcal{\pointcloud}} \exp\mleft(-\sigma^{-1}\dist\mleft(x, y\mright)^2\mright)$,
where $\sigma \in \real_{> 0}$ represents a length scale parameter. For $\dist$, we use the Euclidean distance and normalise it between $0$ and $1$.
Given $\sigma$, we evaluate $\dkl_{\sigma} := \dkl\mleft(
\density^{\!\!\pointcloud} \parallel \density^{\!\!Z} \mright)$, which
measures the similarity between the two density distributions.
Ideally, we want the two distributions to be similar because this
implies that density estimates in a low-dimensional representation are
similar to the ones in the original space.

\subsection{Assessing the Batch Size}\label{Supp:batch_size}

As we used fixed architectures for the hyperparameter search, the
batch size remains the main determinant for the runtime of TopoAE. In
Figure \ref{fig:batch_size}, we display trends (linear fits) on how loss
measures vary with batch size. Addtionally, we draw runtime estimates.
As we applied early stopping, for better comparability, we approximated
the epoch-wise runtime by dividing the execution time of a run by its
number of completed epochs. Interestingly, these plots suggest that the
runtime grows with decreasing batch size~(even though the topological
computation is more costly for larger batch sizes!). In these
experiments, sticking to $0-$dimensional topological features we
conclude that the benefit of using mini-batches for neural network training still dominate the topological computations. The few steep peaks most likely represent outliers (the corresponding runs stopped after few epochs, which is why the effective runtime could be overestimated).

For the loss measures, we see that reconstruction loss tends to
\emph{decrease} with increasing batch size, while our topological loss
tends to \emph{increase} with increasing batch size~(despite
normalization). The second observation might be due to larger batch size
enabling more complex data point arrangements and corresponding
topologies.

\begin{figure*}[tbp]
  \centering
  \subcaptionbox{\data{F-MNIST}}{%
    \includegraphics[width=0.25\linewidth]{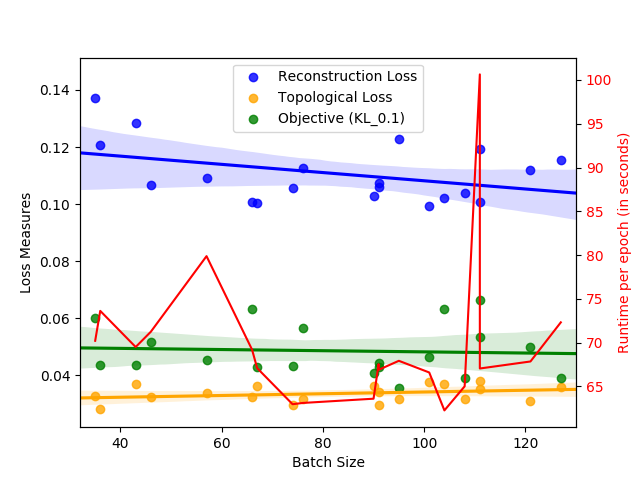}%
  }%
  \subcaptionbox{\data{MNIST}}{%
    \includegraphics[width=0.25\linewidth]{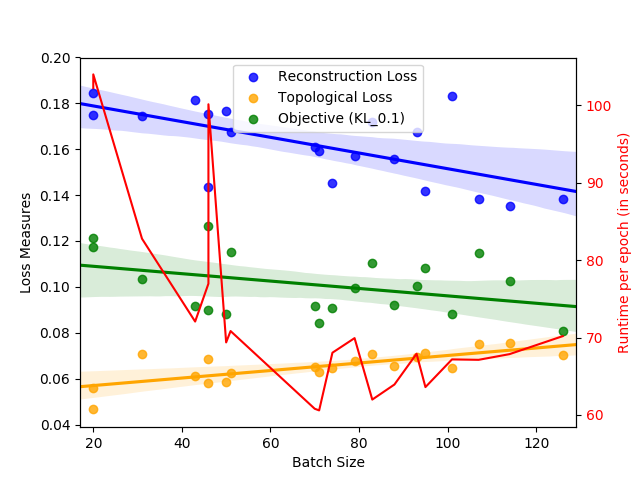}%
  }%
  \subcaptionbox{\data{CIFAR-10}}{%
    \includegraphics[width=0.25\linewidth]{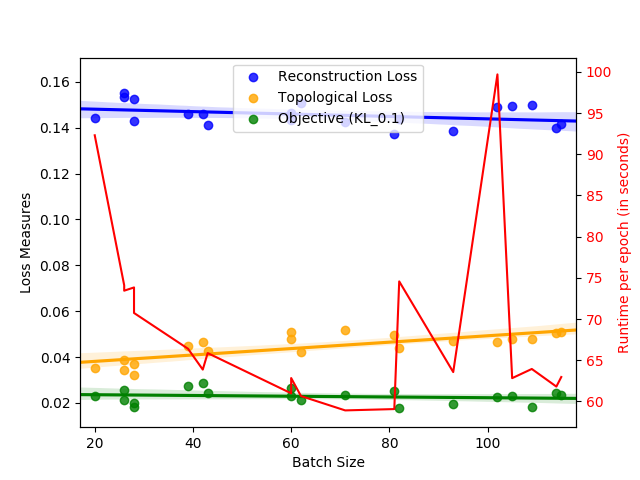}%
  }%
  \subcaptionbox{\data{Spheres}}{%
    \includegraphics[width=0.25\linewidth]{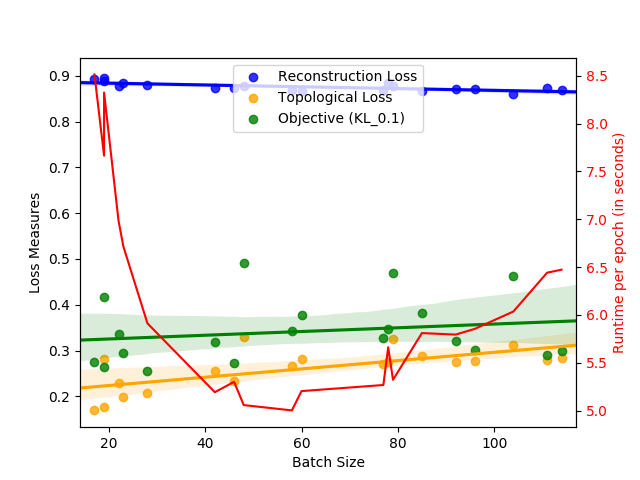}
  }%
  \caption{%
    A scatterplot of batch sizes verses three measures of interest:
    Topological Loss, Reconstruction Loss, and $\dkl_{0.1}$, our
    objective for the hyperparameter search. Additionally, we draw
    per-epoch runtime estimates.
  }
  \label{fig:batch_size}
 \end{figure*}

\subsection{Extending to Variational Autoencoders}\label{Supp:VAE}

In Figure~\ref{fig:VAE} we sketch a preliminary experiment, where we
apply our topological constraint to variational autoencoders for the
\data{Spheres} data set. Also here, we observe that our constraint helps
identifying the nesting structure of the enclosing sphere.

\begin{figure}[tbp]
  \centering
  \subcaptionbox{VAE}{%
    \includegraphics[width=0.50\linewidth]{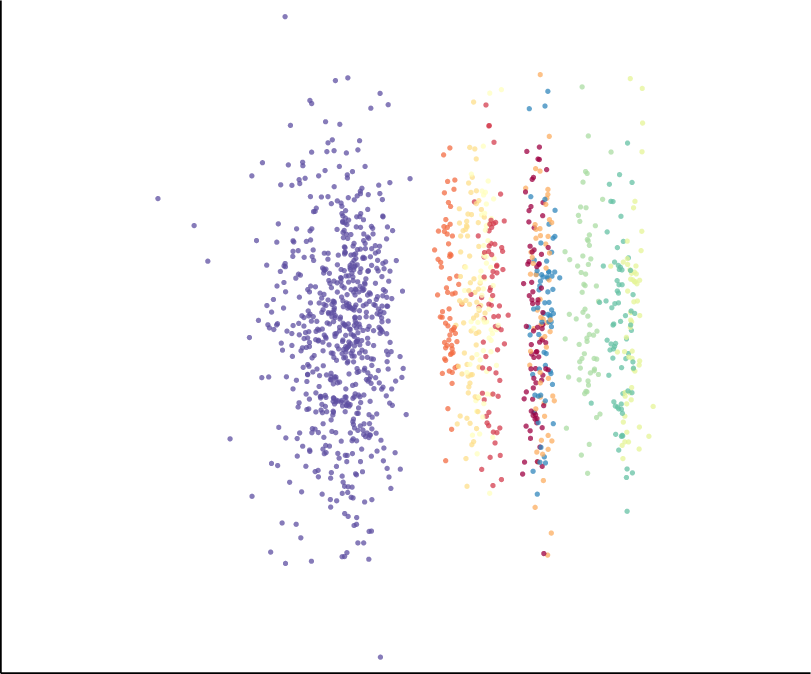}%
  }%
  \subcaptionbox{TopoVAE}{%
    \includegraphics[width=0.50\linewidth]{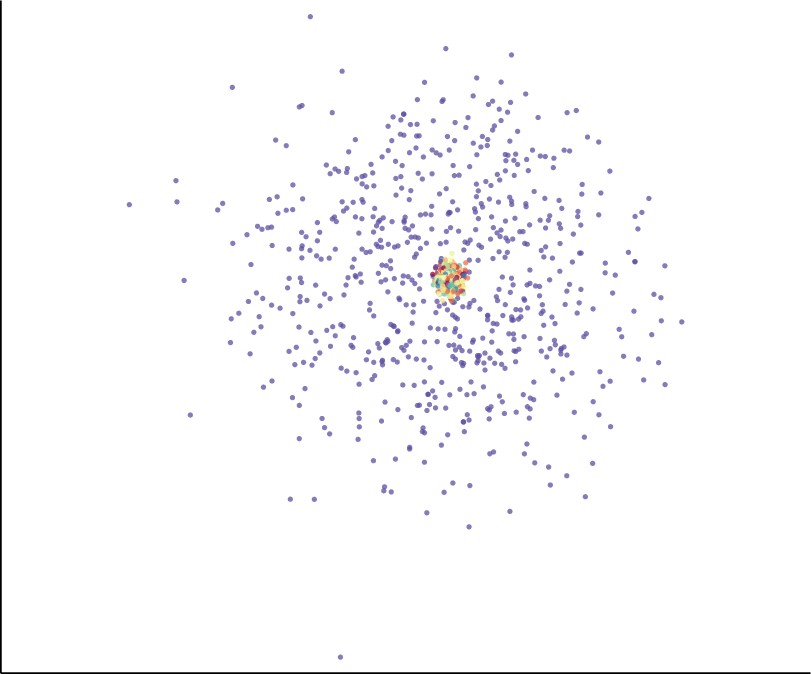}%
  }%
  \caption{%
    A depiction of latent spaces obtained for the \data{Spheres} data
    set with variational autoencoders~(VAEs). Here, VAE represents
    a standard MLP-based VAE, whereas TopoVAE represents the same
    architecture plus our topological constraint.
  }
  \label{fig:VAE}
\end{figure}

\subsection{Topological Distance Calculations\label{sec:Topological distances}}

To assess the topological fidelity of the resulting latent spaces, we
calculate several topological distances between the test data set~(full
dimensionality) and the latent spaces obtained from each method~(two
dimensions). More precisely, we calculate
\begin{inparaenum}[(i)]
  \item the $1$\st Wasserstein distance~($W_1$),
  \item the $2$\nd Wasserstein distance~($W_2$), and
  \item the bottleneck distance~($W_\infty$)
\end{inparaenum}
between the persistence diagrams obtained from the test data set of the
\data{Spheres} data and their resulting 2D latent representations. Even
though our loss function is \emph{not} optimising this distance, we
observe in Table~\ref{tab:Topological distances} that the topological
distance of our method~(``TopoAE'') is always the lowest among all the
methods. In particular, it is \emph{always} smaller than the topological
distance of the latent space of the autoencoder architecture; this is
true for all distance measures, even though $W_\infty$, for example, is
known to be susceptible to outliers. Said experiment serves as a simple
``sanity check'' as it demonstrates that the changes induced by our
method are beneficial in that they reduce the topological distance of
the latent space to the original data set. For a proper comparison of
topological features between the two sets of spaces, a more involved
approach would be required, though.

\begin{table}[tbp]
  \centering
  \small
  \begin{tabular}{lrrr}
    \toprule
    Method & $W_1$ & $W_2$ & $W_\infty$\\
    \midrule
    Isomap & 4.32$\pm$0.037 & 0.477$\pm$0.0045 & 0.165$\pm$0.00096\\
    PCA    & 4.42$\pm$0.053 & 0.476$\pm$0.0046 & 0.158$\pm$0.00108\\
    t-SNE  & 4.38$\pm$0.038 & 0.478$\pm$0.0045 & 0.164$\pm$0.00094\\
    UMAP   & 4.47$\pm$0.042 & 0.478$\pm$0.0045 & 0.160$\pm$0.00092\\
    AE     & 3.99$\pm$0.037 & 0.469$\pm$0.0053 & 0.154$\pm$0.00128\\
    \midrule
    TopoAE & 3.73$\pm$0.076 & 0.459$\pm$0.0055 & 0.152$\pm$0.00268\\
    \bottomrule
  \end{tabular}
  \caption{%
    Topological distances between the test data set and the
    corresponding latent space. We used subsamples of size $m = 500$ and
    $10$ repetitions~(obtaining a mean and a standard deviation).
  }
  \label{tab:Topological distances}
\end{table}

\subsection{Alternative Loss Formulations}\label{sec:Alternative loss}

\begin{figure}[tb]
  \centering
  \subcaptionbox{$\dspace$}{%
    \begin{tikzpicture}[scale=1.25]
      \coordinate (A) at (0.00, 0.00);
      \coordinate (B) at (1.00,-0.25);
      \coordinate (C) at (0.50, 1.25);

      \node[anchor=north] at (A) {A};
      \node[anchor=north] at (B) {B};
      \node[anchor=south] at (C) {C};

      \foreach \x in {A, B, C}
        \filldraw[black] (\x) circle (0.03cm);

      \draw[dotted] (A) -- node[anchor=north] {$d_1$} (B);
      \draw[dotted] (A) -- node[anchor=east ] {$d_3$} (C);
      \draw[dotted] (B) -- node[anchor=west ] {$d_2$} (C);

    \end{tikzpicture}
  }
  \quad
  \subcaptionbox{$\lspace$}{%
    \begin{tikzpicture}[scale=1.25]
      \coordinate (A) at (0.50, 1.25);
      \coordinate (B) at (1.00,-0.25);
      \coordinate (C) at (0.00, 0.00);

      \node[anchor=south] at (A) {A};
      \node[anchor=north] at (B) {B};
      \node[anchor=north] at (C) {C};

      \foreach \x in {A, B, C}
        \filldraw[black] (\x) circle (0.03cm);

     \draw[dotted] (A) -- node[anchor=west ] {$d_2$} (B);
     \draw[dotted] (A) -- node[anchor=east ] {$d_3$} (C);
     \draw[dotted] (B) -- node[anchor=north] {$d_1$} (C);

    \end{tikzpicture}
  }
  \caption{%
    An undesirable configuration of the latent space of three
    non-collinear points, resulting in equal persistence diagrams for
    $\dspace$ and $\lspace$. Pairwise distances are shown as dotted lines.
    We prevent this by not
    \emph{explicitly} minimising the distances between persistence
    diagrams but by including persistence pairings.
  }
  \label{fig:Counterexample}
\end{figure}
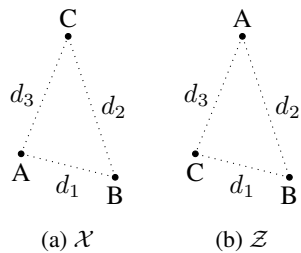

Our choice of loss function was motivated by the observation that
\emph{only} aligning the persistence diagrams between mini-batches of
$\dspace$ and $\lspace$ can lead to degenerate or ``meaningless'' latent
spaces. As a simple example~(see Figure~\ref{fig:Counterexample} for
a visualisation), imagine three non-collinear points in the
input space and the triangle they are forming. Now assume that the
latent space consists of the same triangle~(in terms of its side
lengths) but with permuted labels. A loss term of the form
\begin{equation}
 \loss' := \mleft\| \mat{A}^{\pointcloud}\mleft[ \Index^{\pointcloud} \mright] - \mat{A}^{Z}\mleft[ \Index^{Z} \mright] \mright\|^2
\end{equation}
only measures the distance between persistence diagrams~(which would be
zero in this situation) and would not be able to penalise such a configuration.

\begin{figure*}[t]
  \centering
  \subcaptionbox{PCA}{%
    \includegraphics[width=0.33\linewidth]{figures/spheres/PCA.png}
  }%
  \subcaptionbox{Isomap}{%
    \includegraphics[width=0.33\linewidth]{figures/spheres/Isomap.png}
  }%
  \subcaptionbox{t-SNE}{%
    \includegraphics[width=0.33\linewidth]{figures/spheres/t-SNE.png}
  }\\
  \subcaptionbox{UMAP}{%
    \includegraphics[width=0.33\linewidth]{figures/spheres/UMAP.png}
  }%
  \subcaptionbox{AE}{%
    \includegraphics[width=0.33\linewidth]{figures/spheres/Vanilla.png}
  }
  \subcaptionbox{TopoPCA}{%
    \includegraphics[width=0.33\linewidth]{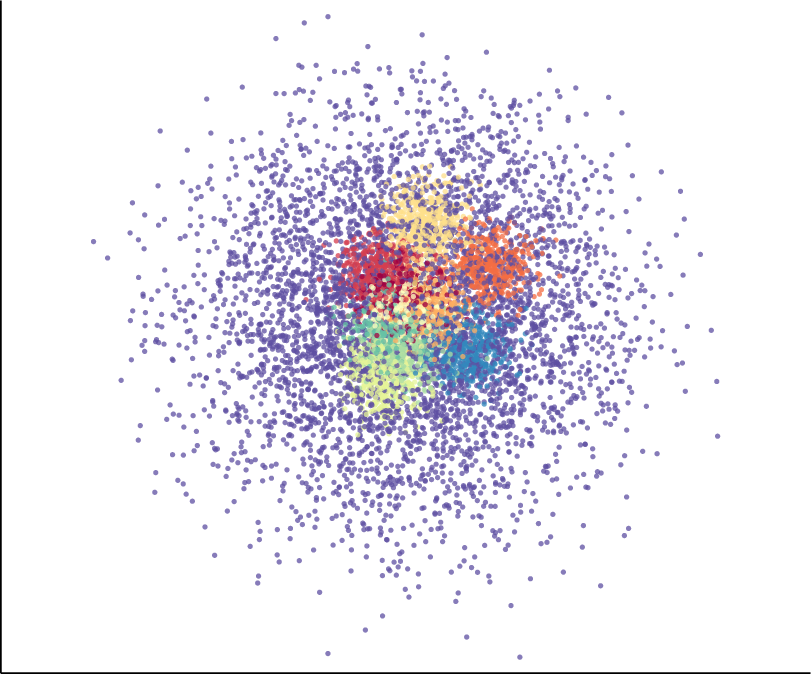}
  }
  \subcaptionbox{TopoAE}{%
    \includegraphics[width=0.33\linewidth]{figures/spheres/TopRegEdgeSymmetric.png}
  }%
  \caption{%
    A depiction of \emph{all} latent spaces obtained for the
    \data{Spheres} data set. TopoAE used a batch size of $28$. This is an enlarged
    version of the figure shown in Section~\ref{sec:Results}.  }
  \label{fig:Spheres all methods}
\end{figure*}

\begin{figure*}[t]
  \centering
  \subcaptionbox{PCA\label{sfig:FMNIST PCA}}{%
    \includegraphics[width=0.33\linewidth]{figures/FashionMNIST/PCA.png}
  }%
  \subcaptionbox{\mbox{t-SNE}}{%
    \includegraphics[width=0.33\linewidth]{figures/FashionMNIST/t-SNE.png}
  }%
  \subcaptionbox{UMAP}{%
    \includegraphics[width=0.33\linewidth]{figures/FashionMNIST/UMAP.png}
  }\\%
  \subcaptionbox{AE}{%
    \includegraphics[width=0.33\linewidth]{figures/FashionMNIST/Vanilla.png}
  }%
  \subcaptionbox{TopoPCA}{%
    \includegraphics[width=0.33\linewidth]{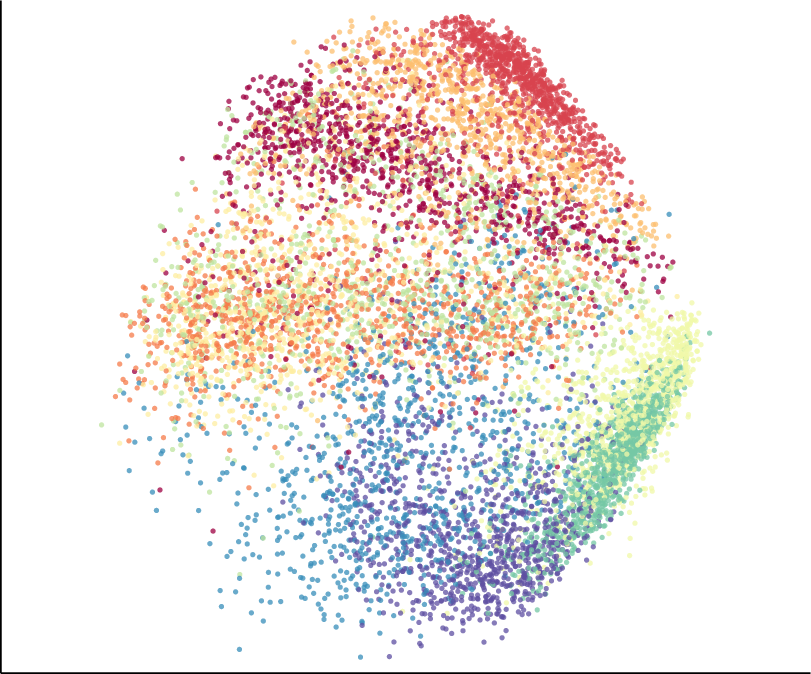}
  }%
  \subcaptionbox{TopoAE}{%
    \includegraphics[width=0.33\linewidth]{figures/FashionMNIST/TopRegEdgeSymmetric.png}
  }%
  \caption{%
    Latent representations of the \data{Fashion-MNIST} data set. TopoAE
    used a batch size of $95$. This is a larger extension
    of the figure shown in Section~\ref{sec:Results}.
  }
  \label{fig:FMNIST all methods}
\end{figure*}

\begin{figure*}[t]
  \centering
  \subcaptionbox{PCA}{%
    \includegraphics[width=0.33\linewidth]{figures/MNIST/PCA.png}
  }%
  \subcaptionbox{\mbox{t-SNE}}{%
    \includegraphics[width=0.33\linewidth]{figures/MNIST/t-SNE.png}
  }%
  \subcaptionbox{UMAP}{%
    \includegraphics[width=0.33\linewidth]{figures/MNIST/UMAP.png}
  }\\%
   \subcaptionbox{AE}{%
    \includegraphics[width=0.33\linewidth]{figures/MNIST/Vanilla.png}
  }%
  \subcaptionbox{TopoPCA}{%
    \includegraphics[width=0.33\linewidth]{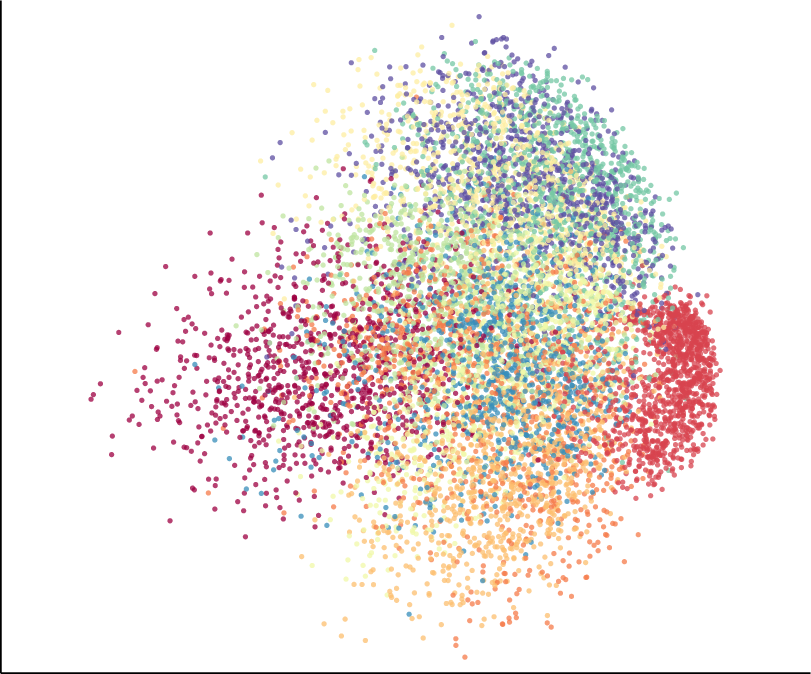}
  }%
  \subcaptionbox{TopoAE}{%
    \includegraphics[width=0.33\linewidth]{figures/MNIST/TopRegEdgeSymmetric.png}
  }%
  \caption{%
    Latent representations of the \data{MNIST} data set. TopoAE used
    a batch size of $126$. This is a larger extension
    of the figure shown in Section~\ref{sec:Results}.
  }
  \label{fig:MNIST all methods}
\end{figure*}

\begin{figure*}[t]
  \centering
  \subcaptionbox{PCA}{%
    \includegraphics[width=0.33\linewidth]{figures/CIFAR/PCA.png}
  }%
  \subcaptionbox{\mbox{t-SNE}}{%
    \includegraphics[width=0.33\linewidth]{figures/CIFAR/t-SNE.png}
  }%
  \subcaptionbox{UMAP}{%
    \includegraphics[width=0.33\linewidth]{figures/CIFAR/UMAP.png}
  }\\%
   \subcaptionbox{AE}{%
    \includegraphics[width=0.33\linewidth]{figures/CIFAR/Vanilla.png}
  }%
  \subcaptionbox{TopoPCA}{%
    \includegraphics[width=0.33\linewidth]{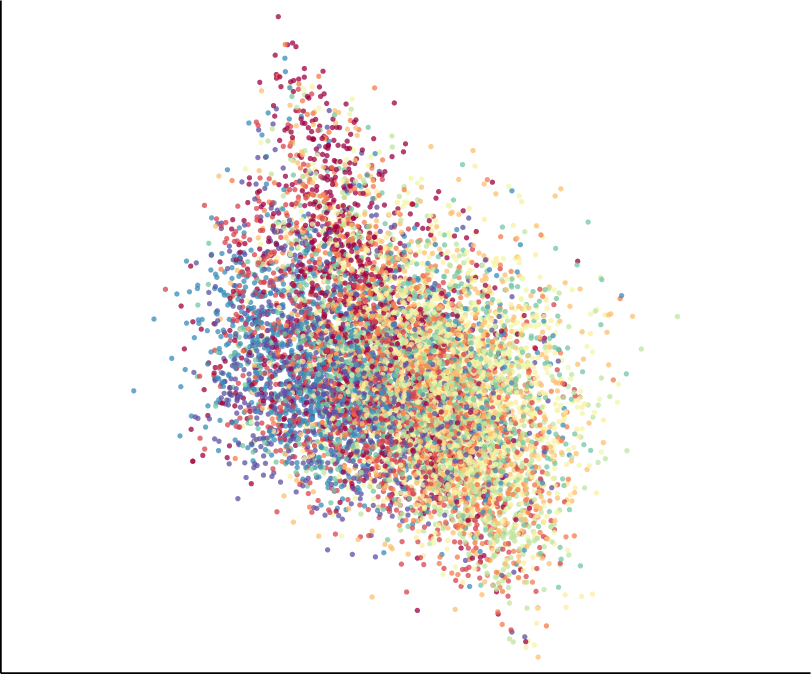}
    }%
  \subcaptionbox{TopoAE}{%
    \includegraphics[width=0.33\linewidth]{figures/CIFAR/TopRegEdgeSymmetric.png}
  }%
  \caption{%
    Latent representations of the \data{CIFAR-10} data set. TopoAE used
    a batch size of $82$. This is a larger extension of the figure shown in Section~\ref{sec:Results}.
  }
  \label{fig:CIFAR all methods}
\end{figure*}

\clearpage

\begin{sidewaystable*}
    \centering
    {\fontsize{7.0}{7.5}\selectfont
\setlength{\tabcolsep}{1pt}
\begin{tabular}{llrrrrrrrrrr}
\toprule
 Data set &   Method & $\dkl_{0.001}$ & $\dkl_{0.01}$ &  $\dkl_{0.1}$ &   $\dkl_{1}$ & $\dkl_{10}$ & $\ell$-Cont & $\ell$-MRRE &  $\ell$-Trust & $\ell$-RMSE & Data MSE \\
\midrule
\multirow{7}*{\data{Spheres}}     &	Isomap            &              0.53095$\pm$0.01929 &              0.18096$\pm$0.02547 &              0.42048$\pm$0.00559 &              0.00881$\pm$0.00020 &              0.000089$\pm$0.000002 &   \second{0.79027$\pm$0.00244} &   \second{0.24573$\pm$0.00158} &   \second{0.67643$\pm$0.00323} &             10.37188$\pm$0.22856 &                                -- \\
& PCA               &     \first{0.22445$\pm$0.00691} &              0.33231$\pm$0.00552 &              0.65121$\pm$0.00256 &              0.01530$\pm$0.00010 &              0.000159$\pm$0.000001 &              0.74740$\pm$0.00140 &              0.29402$\pm$0.00108 &              0.62557$\pm$0.00066 &             11.76482$\pm$0.01460 &              0.96103$\pm$0.00029 \\
& TSNE              &   \second{0.22794$\pm$0.00722} &   \second{0.15228$\pm$0.00805} &              0.52722$\pm$0.03261 &              0.01271$\pm$0.00058 &              0.000133$\pm$0.000006 &              0.77300$\pm$0.00513 &     \first{0.21740$\pm$0.00472} &     \first{0.67862$\pm$0.00474} &     \first{8.05018$\pm$0.11057} &                                -- \\
& UMAP              &              0.24752$\pm$0.01917 &              0.15687$\pm$0.00599 &              0.61326$\pm$0.00752 &              0.01658$\pm$0.00028 &              0.000178$\pm$0.000003 &              0.75153$\pm$0.00360 &              0.24968$\pm$0.00094 &              0.63483$\pm$0.00185 &   \second{9.27009$\pm$0.03417} &                                -- \\
& Vanilla           &              0.28432$\pm$0.02165 &              0.56571$\pm$0.02864 &              0.74588$\pm$0.04323 &              0.01664$\pm$0.00115 &              0.000172$\pm$0.000013 &              0.60663$\pm$0.01685 &              0.34918$\pm$0.00903 &              0.58843$\pm$0.00475 &             13.33061$\pm$0.05198 &     \first{0.81545$\pm$0.00106} \\
& TopoPCA           &              0.43344$\pm$0.01823 &              0.17837$\pm$0.00888 &   \second{0.39816$\pm$0.01178} &   \second{0.00866$\pm$0.00025} &   \second{0.000087$\pm$0.000003} &              0.77320$\pm$0.00135 &              0.32765$\pm$0.00158 &              0.62260$\pm$0.00251 &             11.91542$\pm$0.56134 &              0.97305$\pm$0.00067 \\
& TopoAE &              0.62765$\pm$0.05415 &     \first{0.08504$\pm$0.01270} &     \first{0.32572$\pm$0.02050} &     \first{0.00694$\pm$0.00055} &     \first{0.000069$\pm$0.000006} &     \first{0.82200$\pm$0.01813} &              0.27239$\pm$0.01108 &              0.65775$\pm$0.01428 &             13.45753$\pm$0.04177 &   \second{0.86812$\pm$0.00074} \\
\midrule

\multirow{6}*{\data{F-MNIST}} & PCA               &              0.22559$\pm$0.00011 &     \first{0.35594$\pm$0.00011} &     \first{0.05205$\pm$0.00004} &     \first{0.00069$\pm$0.00000} &     \first{0.000007$\pm$0.000000} &              0.96777$\pm$0.00001 &              0.05744$\pm$0.00001 &              0.91681$\pm$0.00003 &      \first{9.05121$\pm$0.00041} &              0.18439$\pm$0.00000 \\
& TSNE              &     \first{0.03516$\pm$0.00226} &              0.40477$\pm$0.01251 &              0.07095$\pm$0.00962 &              0.00198$\pm$0.00026 &              0.000023$\pm$0.000003 &              0.96731$\pm$0.00268 &     \first{0.01962$\pm$0.00073} &   \second{0.97405$\pm$0.00070} &              41.25460$\pm$0.53671 &                                -- \\
& UMAP              &   \second{0.05069$\pm$0.00238} &              0.42362$\pm$0.00609 &              0.06491$\pm$0.00161 &              0.00163$\pm$0.00005 &              0.000019$\pm$0.000001 &     \first{0.98126$\pm$0.00016} &              0.02867$\pm$0.00034 &              0.95874$\pm$0.00060 &   \second{13.68933$\pm$0.02896} &                                -- \\
& Vanilla           &              0.17177$\pm$0.13603 &              0.47798$\pm$0.09567 &              0.06791$\pm$0.00700 &              0.00125$\pm$0.00017 &              0.000014$\pm$0.000002 &              0.96849$\pm$0.00372 &   \second{0.02562$\pm$0.00217} &     \first{0.97418$\pm$0.00119} &              20.70674$\pm$3.56861 &     \first{0.10197$\pm$0.00222} \\
& TopoPCA           &              0.18857$\pm$0.00197 &   \second{0.36201$\pm$0.00186} &   \second{0.05296$\pm$0.00045} &   \second{0.00083$\pm$0.00002} &   \second{0.000009$\pm$0.000000} &              0.97030$\pm$0.00013 &              0.05584$\pm$0.00022 &              0.91790$\pm$0.00034 &              20.88881$\pm$0.29929 &              0.18315$\pm$0.00002 \\
& TopoAE &              0.11039$\pm$0.02948 &              0.39204$\pm$0.03264 &              0.05353$\pm$0.00959 &              0.00100$\pm$0.00015 &              0.000011$\pm$0.000002 &   \second{0.97998$\pm$0.00194} &              0.03156$\pm$0.00253 &              0.95612$\pm$0.00391 &              20.49122$\pm$0.93206 &   \second{0.12071$\pm$0.00238} \\

 \midrule
 %
 %
 \multirow{6}*{\data{MNIST}} & PCA               &              0.16754$\pm$0.00051 &              0.38876$\pm$0.00146 &              0.16301$\pm$0.00059 &              0.00160$\pm$0.00001 &   \second{0.000016$\pm$0.000000} &              0.90084$\pm$0.00016 &              0.16582$\pm$0.00022 &              0.74546$\pm$0.00048 &     \first{13.17437$\pm$0.00216} &              0.22269$\pm$0.00002 \\
& TSNE              &     \first{0.03767$\pm$0.00140} &     \first{0.27695$\pm$0.05266} &   \second{0.13266$\pm$0.02362} &              0.00214$\pm$0.00041 &              0.000024$\pm$0.000004 &              0.92101$\pm$0.00288 &     \first{0.03953$\pm$0.00129} &     \first{0.94624$\pm$0.00147} &              22.89261$\pm$0.24373 &                                -- \\
& UMAP              &   \second{0.07214$\pm$0.00091} &   \second{0.32063$\pm$0.00320} &              0.14568$\pm$0.00207 &              0.00234$\pm$0.00004 &              0.000027$\pm$0.000001 &     \first{0.93992$\pm$0.00066} &   \second{0.05109$\pm$0.00022} &   \second{0.93770$\pm$0.00039} &   \second{14.61535$\pm$0.04332} &                                -- \\
& Vanilla           &              0.44690$\pm$0.08540 &              0.61993$\pm$0.11742 &              0.15542$\pm$0.02203 &   \second{0.00156$\pm$0.00023} &              0.000016$\pm$0.000002 &              0.91293$\pm$0.00564 &              0.05828$\pm$0.00353 &              0.93699$\pm$0.00262 &              18.18105$\pm$0.21459 &     \first{0.13732$\pm$0.00160} \\
& TopoPCA           &              0.16138$\pm$0.00716 &              0.39157$\pm$0.01283 &              0.15556$\pm$0.00313 &              0.00162$\pm$0.00007 &              0.000017$\pm$0.000001 &              0.90301$\pm$0.00057 &              0.16297$\pm$0.00049 &              0.75040$\pm$0.00091 &              17.33353$\pm$0.82592 &              0.22477$\pm$0.00009 \\
& TopoAE &              0.32427$\pm$0.03312 &              0.34069$\pm$0.03056 &     \first{0.11012$\pm$0.01069} &     \first{0.00114$\pm$0.00010} &     \first{0.000012$\pm$0.000001} &   \second{0.93210$\pm$0.00132} &              0.05553$\pm$0.00044 &              0.92844$\pm$0.00142 &              19.57784$\pm$0.01812 &   \second{0.13884$\pm$0.00066} \\
 \midrule
 %
 %
\multirow{6}*{\data{CIFAR}} & PCA               &              0.27320$\pm$0.00014 &              0.59073$\pm$0.00004 &   \second{0.01961$\pm$0.00001} &     \first{0.00023$\pm$0.00000} &     \first{0.000002$\pm$0.000000} &     \first{0.93130$\pm$0.00000} &              0.11921$\pm$0.00005 &              0.82117$\pm$0.00002 &     \first{17.71567$\pm$0.00084} &              0.14816$\pm$0.00000 \\
& TSNE              &     \first{0.04451$\pm$0.00222} &              0.62733$\pm$0.01427 &              0.03014$\pm$0.00333 &              0.00073$\pm$0.00007 &              0.000009$\pm$0.000001 &              0.90300$\pm$0.00611 &     \first{0.10265$\pm$0.00242} &   \second{0.86325$\pm$0.00151} &   \second{25.61099$\pm$0.11551} &                                -- \\
& UMAP              &   \second{0.06934$\pm$0.00202} &              0.61673$\pm$0.00052 &              0.02562$\pm$0.00019 &              0.00050$\pm$0.00001 &              0.000006$\pm$0.000000 &              0.92045$\pm$0.00013 &              0.12680$\pm$0.00028 &              0.81668$\pm$0.00019 &              33.57785$\pm$0.00796 &                                -- \\
& Vanilla           &              0.37737$\pm$0.06507 &              0.66834$\pm$0.02992 &              0.03458$\pm$0.00448 &              0.00062$\pm$0.00021 &              0.000007$\pm$0.000002 &              0.85072$\pm$0.00429 &              0.13204$\pm$0.00316 &     \first{0.86359$\pm$0.00442} &              36.26827$\pm$0.56159 &   \second{0.14030$\pm$0.00190} \\
& TopoPCA           &              0.27207$\pm$0.00619 &   \second{0.58401$\pm$0.00515} &              0.01973$\pm$0.00040 &   \second{0.00024$\pm$0.00001} &   \second{0.000003$\pm$0.000000} &              0.92439$\pm$0.00115 &              0.12607$\pm$0.00133 &              0.81551$\pm$0.00139 &              30.73848$\pm$1.35231 &              0.15002$\pm$0.00076 \\
& TopoAE &              0.20877$\pm$0.00951 &     \first{0.55642$\pm$0.00412} &     \first{0.01879$\pm$0.00051} &              0.00031$\pm$0.00002 &              0.000003$\pm$0.000000 &   \second{0.92691$\pm$0.00100} &   \second{0.10809$\pm$0.00210} &              0.84514$\pm$0.00359 &              37.85914$\pm$0.03303 &     \first{0.13975$\pm$0.00171} \\
\bottomrule
\end{tabular}
}

    \caption{%
      Extended version of the table from the main paper, showing more
      length scales and variance estimates.
    }
    \label{tab:supp_results}
\end{sidewaystable*}

\end{document}